\documentclass{article} 
\usepackage[final]{colm2026_conference}

\usepackage{microtype}
\usepackage{hyperref}
\usepackage{url}
\usepackage{booktabs}
\usepackage{graphicx}
\usepackage{booktabs}
\usepackage{multirow}
\usepackage{adjustbox}
\usepackage{amsmath}
\usepackage{amsthm}
\usepackage{algorithm}
\usepackage{algpseudocode}
\definecolor{mygray}{gray}{.92}
\usepackage{algpseudocode}
\usepackage{bm}
\usepackage{xcolor}
\usepackage[english]{babel}
\newtheorem{theorem}{Theorem}
\newtheorem{assumption}{Assumption}

\usepackage{lineno}
\usepackage{subcaption}
\definecolor{darkblue}{rgb}{0, 0, 0.5}
\hypersetup{colorlinks=true, citecolor=darkblue, linkcolor=darkblue, urlcolor=darkblue}
\newcommand{\myparagraph}[1]{\noindent\textbf{#1}\quad}

\title{Wiener Representation Filtering for VLM Hallucination Suppression}

\author{Ameen Ali\thanks{These authors contibuted equally to this work} , Tamim Zoabi$^*$, Lidor Brami \& Lior  Wolf \\
Tel Aviv University\\
\texttt{\{ameenali,tamimzoabi,lidorbrami\}@mail.tau.ac.il},~\texttt{wolf@cs.tau.ac.il} \\
}

\newtheorem{lemma}[theorem]{Lemma}

\begin{document}

\ifcolmsubmission
\linenumbers
\fi

\maketitle

\begin{abstract}
Vision-language models (VLMs) excel at open-ended captioning and visual QA but often describe objects, attributes, or relations absent from the image, a phenomenon known as object hallucination. We propose a {training-free, post-hoc representation editing technique} that operates in the representation space of the language backbone. The method performs a lightweight, one-time offline calibration on a modest paired dataset to estimate the required covariance structures, using only forward passes and empirical second-order statistics with no gradient updates or fine-tuning, after which the correction is absorbed directly into the model's existing weights.
By modeling hidden states as a superposition of truthful and hallucination-associated components, we derive a Wiener-type estimator whose optimal gains are given in closed form from the covariances of paired truthful and hallucinated representations.
An eigendecomposition yields mode-wise attenuation that respects a stability criterion, i.e., the filter responds continuously to estimation noise.
The correction is applied once to the feed-forward output projections of selected deeper layers, at inference time, the model runs unchanged and at the same speed.
Experiments on LLaVA-1.5, MiniGPT-4, Gemma3, and mPLUG-Owl2 demonstrate consistent reductions in object hallucination on CHAIR, POPE, and MME while maintaining caption fluency and overall response quality. We further demonstrate the generality of our approach on the TempCompass video understanding benchmark and on discrete diffusion language models for grounded dialogue, showing that representation filtering reduces hallucinations even in temporal video reasoning and multi-step, sequence-wide denoising settings.
\end{abstract}

\section{Introduction}
\label{sec:intro}

Vision-language models (VLMs) have achieved remarkable success in bridging visual perception and natural language processing~\citep{liu2023visual,ye2023mplugowl2,ye2023mplugowl,chen2023minigptv2,zhu2023minigpt}, powering diverse applications such as image captioning, visual question answering, and multimodal reasoning. By integrating pre-trained vision encoders with large language models (LLMs), these systems generate fluent, context-aware descriptions of visual scenes. Despite their prowess, VLMs remain plagued by \emph{object hallucinations} (OH): the fabrication of objects, attributes, or spatial relations absent from the input image, often stemming from linguistic priors or distributional biases in training data \citep{huang2024visual,rohrbach-etal-2018-object,bai2024hallucination,zhong-etal-2024-investigating,huang2024visual}.

Existing mitigation strategies fall into two categories: decoding-time, such as constrained beam search \citep{freitag-al-onaizan-2017-beam} or logit debiasing~\citep{chuang2024dola,leng2024mitigating}, which modulate generation to promote grounded outputs, and post-hoc refinements such as Woodpecker \citep{yin2024woodpecker}, LURE \citep{zhou2024analyzing}), and the reference-guided editing method HALC \citep{chen2024halc}. While these yield gains in controlled scenarios, they introduce substantial overhead, often 5--10x inference slowdowns, demand task-specific adaptations, or rely on external resources.
\begin{figure*}[t]
    \centering
    \begin{subfigure}[t]{0.40\textwidth}
        \centering
        \includegraphics[width=\linewidth, height=0.75\textwidth]{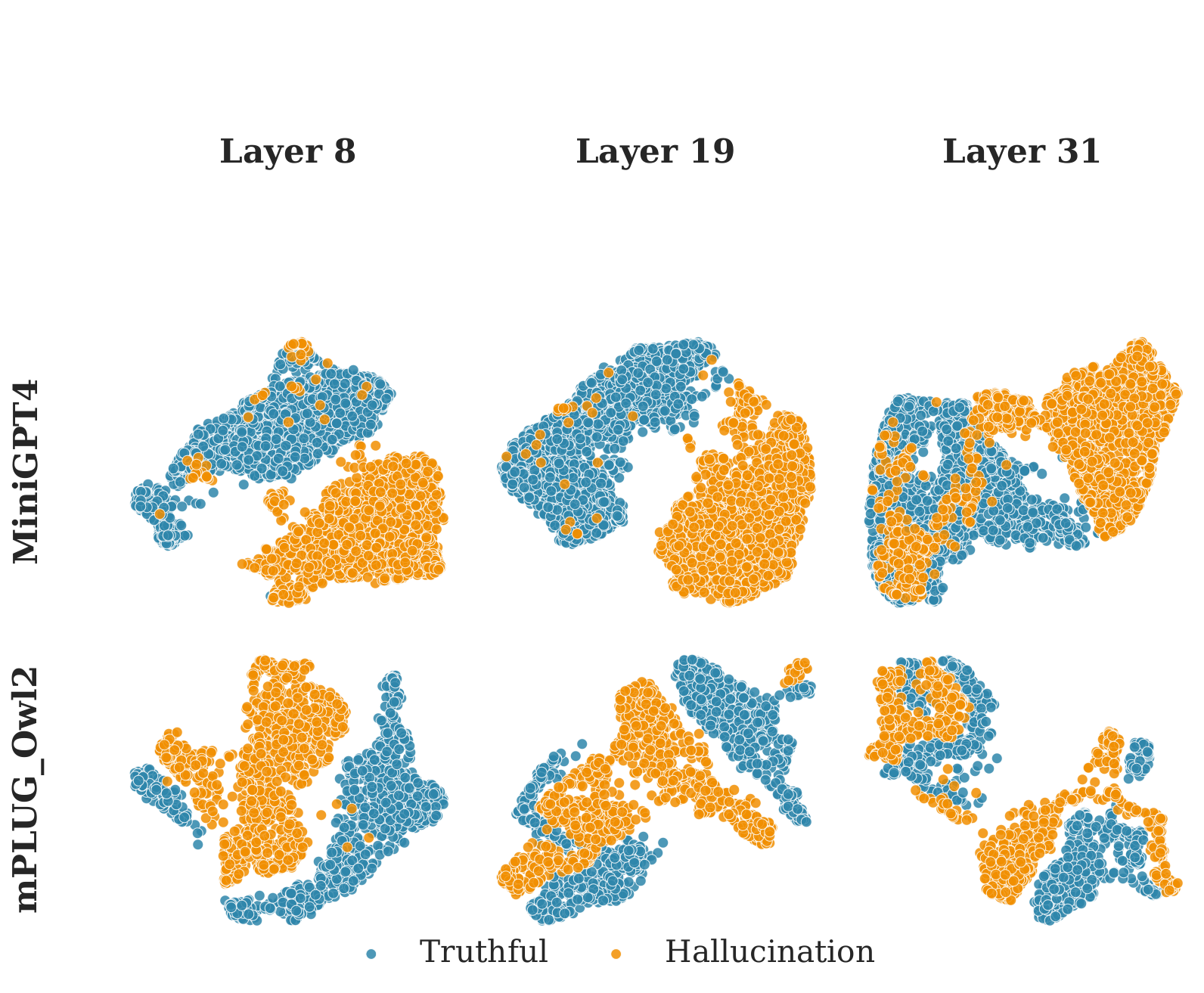}
        \subcaption*{(a)}
        \label{fig:tsne}
    \end{subfigure}
    \hfill
    \begin{subfigure}[t]{0.55\textwidth}
        \centering
        \includegraphics[width=\linewidth,height=0.5\textwidth]{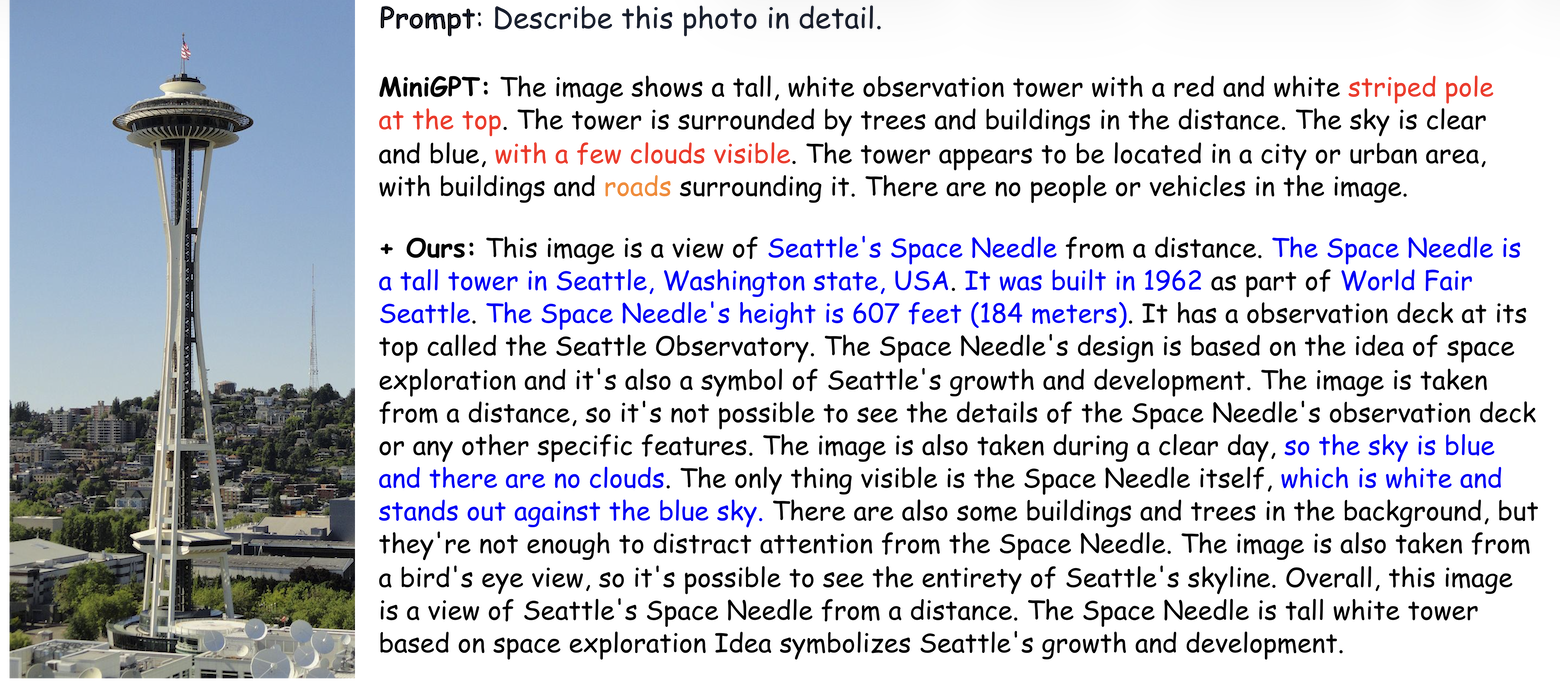}
        \subcaption*{(b)}
        \label{fig:qual_1}
    \end{subfigure}
    \caption{(a) UMAP visualization of MiniGPT4 and mPLUG-Owl2 hidden activations for truthful (blue) and hallucinatory (amber) samples from LURE, showing distinct clusters that reveal low-rank hallucination subspaces. (b) Qualitative comparison of captions generated by MiniGPT before and after applying our proposed Wiener based correction. The baseline model hallucinates non-existent objects such as clouds, and striped pole at the top, while our method suppresses these errors.}
    \label{fig:teaaser}
\end{figure*}
We address this gap with a gradient-free and inference-efficient correction derived from a representation-space estimation framework. 
We model hidden states as the superposition of a truthful component and a hallucination-associated distortion, each characterized by second-order statistics estimated from paired truthful and hallucinatory generations. 
Under this formulation, hallucination mitigation reduces to a linear minimum mean-square error (MMSE) estimation problem.

Empirically, we observe structured geometric separation between truthful and hallucinatory representations in deeper layers. 
As shown in Fig.~\ref{fig:teaaser}(a), UMAP projections reveal distinct latent regions for truthful (blue) and hallucinatory (amber) states, suggesting that hallucination manifests as anisotropic distortion in representation space.

The MMSE formulation yields a closed-form Wiener estimator whose operator depends on the covariances of truthful and hallucination components. Solving a generalized eigenvalue problem identifies spectral modes ranked by distortion-to-signal ratios, which are smoothly reweighted by Wiener gains to produce covariance-driven filtering in representation space. This continuous formulation ensures stability: small covariance estimation errors induce proportionally bounded changes in the operator.

The resulting transformation acts as a spectral equalizer in latent space. We estimate it once from a calibration set and apply it offline to the feed-forward output projections of selected deeper LLM layers. A single sharpness exponent~$\alpha$ controls attenuation strength, suppressing hallucination-dominated modes while preserving semantically grounded directions. Because the edit is absorbed into existing weights, the architecture, parameter count, and inference cost remain unchanged.

Inspired by classical Wiener filtering in signal processing~\citep{oppenheim1999discrete}, our approach interprets hallucination as structured spectral distortion and corrects it through covariance-driven spectral shaping. 
Evaluations on LLaVA-1.5~\cite{liu2023visual}, MiniGPT-4~\citep{zhu2023minigpt}, and mPLUG-Owl2~\citep{ye2023mplugowl2} across CHAIR and POPE benchmarks demonstrate consistent reductions in object hallucination while preserving fluency and visual grounding. A qualitative comparison in Fig.~\ref{fig:teaaser} (b) further illustrates Wiener’s effect, showing how the same model, when filtered through our method, eliminates hallucinated objects and yields faithful, visually grounded descriptions.
\section{Related Works}
\textbf{Vision-Language Models}
VLMs combine visual and textual modalities for tasks such as image captioning and VQA. They typically consist of a pre-trained vision encoder (e.g., CLIP-ViT~\citep{radford2021learning}) to extract image features, a projection module that maps these features into the LLM embedding space, and a large language model (e.g., LLaMA~\citep{touvron2023llama} or Vicuna~\citep{chiang2023vicuna}), which is usually frozen during alignment and later instruction-tuned on multimodal datasets.

Modern architectures improve modularity and efficiency through three integration strategies: early fusion~\citep{gao2023llama,zhang2023llamaadapter,liu2023visual,bai2023qwenvlversatilevisionlanguagemodel}, bridging~\citep{dai2023instructblip}, and mid-fusion~\citep{alayrac2022flamingo}. 
Early fusion prepends projected visual tokens to the LLM input, as in LLaVA~\citep{liu2023visual} and Qwen-VL2~\citep{wang2024qwen2}. 
Bridging compresses visual tokens into a small set of query embeddings, used in BLIP~\citep{dai2023instructblip} and mPLUG-Owl~\citep{ye2023mplugowl}. 
Mid-fusion distributes cross-attention across layers, as in Flamingo~\citep{alayrac2022flamingo} and LLaMA 3.2-Vision~\citep{dubey2024llama}, enabling deeper multimodal interaction. 

\textbf{Object Hallucinations in VLMs}
Visual hallucinations in vision large language models refer to the generation of plausible but factually incorrect outputs that misrepresent the input visual content~\citep{liu2024survey,li-etal-2023-evaluating,kim2024discovering}, encompassing a range of distortions from subtle inaccuracies to overt fabrications. 
These include inventing object attributes (e.g., assigning colors, sizes, or states not depicted, such as describing a grayscale image as vibrant) or fabricating spatial relationships. 
In multimodal tasks like visual question answering~\citep{LiaZeh_VisionAmplified_MICCAI2025,10.1145/3664647.3681663,zhu2024combating} and image captioning~\citep{rohrbach-etal-2018-object,biten2022let,zhai2023halle,li-etal-2023-evaluating}, hallucinations typically appear as overly assured claims that merge observed visual elements with embedded textual knowledge, resulting in responses that appear relevant to the prompt on the surface but compromise reliability in scenarios demanding accurate visual interpretation. 

To mitigate hallucinations in VLMs, methods are broadly classified into training-based~\citep{sun2024aligning,jing2024fgaif,zhao2025looking} and training-free~\citep{huang2024opera,leng2024mitigating,chen2024halc,yang2025nullu} categories. 
Training-based techniques involve retraining components to strengthen vision-language alignment, such as fine-tuning the projection module or LLM with hallucination-aware losses that reward fidelity to input visuals, often leveraging datasets augmented with paired image-text contrasts. 
Additional strategies include reinforcement learning from human feedback~\citep{sun2024aligning} tailored to multimodal outputs. 
Recently, \citet{jing2024fgaif} introduced FGAIF, which provides fine-grained AI-generated feedback for hallucination supervision (object existence, attributes, and relations), enabling more precise RL-based alignment of LVLMs.  

Among training-free approaches, \citet{huang2024opera} proposed OPERA, which applies an over-trust penalty and a retrospection-allocation mechanism during decoding to reduce hallucinations. 
\citet{leng2024mitigating} introduced Visual Contrastive Decoding, which contrasts outputs on original versus distorted visuals to suppress object hallucinations. 
HALC~\citep{chen2024halc} is an adaptive focal-contrast decoding strategy that corrects hallucinated tokens on the fly. 
Similarly, Nullu~\citep{yang2025nullu} identifies hallucination-related directions by performing an SVD over feature differences and then removes the top singular vectors via hard projection, fully excising those directions from the model’s representation space. 

In contrast, our method formulates hallucination mitigation as a linear minimum mean-square error (MMSE) estimation problem.
We model hidden states as a combination of truthful signal and hallucination-induced distortion and estimate their respective covariances from calibration data.
Using this second-order characterization, we compute a closed-form Wiener operator that proportionally attenuates hallucination-dominated modes based on their distortion-to-signal ratios, preserving semantically meaningful directions.
This approach provides stable correction without removing any component entirely, maintaining the integrity of the model’s internal representations while reducing hallucinations.

{An earlier version of this work~\citep{ali2024suppressing} proposed suppressing hallucination-related directions via low-pass spectral filter. The present work substantially extends that formulation by introducing a Wiener-filter-based representation editing framework that performs smooth, covariance-aware attenuation rather than a spectral truncation. This generalization explicitly models both signal and noise statistics, yielding a more principled and flexible mechanism for suppressing hallucination-related components while preserving task-relevant information.}

\section{Method}
This section introduces the VLM background and notation, analyzes the covariance structure of hallucinated representations, derives the Wiener spectral filter and its stability guarantees, and describes how the resulting operator is absorbed into the feed-forward projections of deeper transformer layers.
\subsection{Background and Notation}
\myparagraph{Notation.} Throughout the paper, vectors are denoted using bold lowercase letters (e.g., $\mathbf{x}, \mathbf{s}, \mathbf{n}$) and matrices using bold uppercase letters (e.g., $\mathbf{W}$). Scalars are in standard italic (e.g., $d, \alpha, \lambda_j$). The operator norm is $\|\cdot\|_2$, and for a symmetric matrix $\mathbf{C}$, its spectrum is $\sigma(\mathbf{C})$.

\myparagraph{Background} Modern vision-language models (VLMs) process multimodal inputs through a three-stage pipeline. 
A vision encoder $\phi_v: \mathcal{I} \rightarrow \mathbb{R}^{n \times d_v}$ extracts $n$ visual tokens of dimension $d_v$. 
A projection layer $\phi_p: \mathbb{R}^{d_v} \rightarrow \mathbb{R}^{d}$ maps these tokens into the language model embedding space of dimension $d$. 
The combined visual-text sequence is then processed by a large language model (LLM) with $L$ transformer layers.

At each layer $\ell \in \{1,\dots,L\}$, hidden states are updated via self-attention and feed-forward networks (FFNs). 
The FFN block is defined as
\begin{equation}
\mathrm{FFN}_{\ell}(\mathbf{x}) 
= \mathbf{W}_{\ell}^{\text{out}} 
\sigma(\mathbf{W}_{\ell}^{\text{in}} \mathbf{x} + \mathbf{b}_{\ell}^{\text{in}}) 
+ \mathbf{b}_{\ell}^{\text{out}},
\end{equation}
where 
$\mathbf{W_{\ell}^{\text{in}}} \in \mathbb{R}^{d_{\text{ff}} \times d}$ ,$\mathbf{W_{\ell}^{\text{out}}} \in \mathbb{R}^{d \times d_{\text{ff}}}$, $\mathbf{b}_{\ell}^{\text{in}} \in \mathbb{R}^{d_{\text{ff}}}$ and $\mathbf{b}_{\ell}^{\text{out}} \in \mathbb{R}^{d}$.

Let $\mathbf{x} \in \mathbb{R}^d$ denote a representation vector with covariance
\begin{equation}
\boldsymbol{\Sigma} = \mathbb{E}[(\mathbf{x}-\boldsymbol{\mu})(\mathbf{x}-\boldsymbol{\mu})^\top].
\end{equation}
For symmetric $\boldsymbol{\Sigma}$, let $\boldsymbol{\Sigma} = \mathbf{Q}\boldsymbol{\Lambda} \mathbf{Q}^\top$ denote its eigendecomposition, where $\{\mathbf{q_j}\}$ are orthogonal spectral modes with variances $\{\lambda_j\}$. Spectral operators modify the covariance structure via functions applied to eigenvalues,
\begin{equation}
g(\boldsymbol{\Sigma}) = \mathbf{Q} g(\boldsymbol{\Lambda}) \mathbf{Q}^\top .
\end{equation}
\subsection{Hallucination Covariance Structure}

Object hallucination corresponds to systematic discrepancies between truthful and hallucinated internal representations.
Empirical visualization using UMAP (Fig.~\ref{fig:teaaser} (a)) reveals structured separation between truthful and hallucinated hidden states in deeper layers, indicating non-random geometric divergence.

To characterize this divergence quantitatively, we utilize a calibration dataset 
$\mathcal{D} = \{(I_i, c_i^+, c_i^-)\}_{i=1}^N$, 
where $c_i^+$ is a hallucinated caption generated by the model and $c_i^-$ is a human-verified truthful caption describing the same image $I_i$.
For each transformer layer $\ell$, we extract sequence-averaged representations:
\begin{equation}
\mathbf{x_i^+} = \frac{1}{T_i^+} \sum_{t=1}^{T_i^+} \mathbf{h}_{\ell,t}(I_i,c_i^+), 
\quad
\mathbf{x_i^-} = \frac{1}{T_i^-} \sum_{t=1}^{T_i^-} \mathbf{h}_{\ell,t}(I_i,c_i^-).
\end{equation}

We model hallucination as an additive perturbation in representation space, reflecting the empirical observation that hallucinated captions produce structured shifts relative to truthful representations while preserving much of the underlying semantic signal:
\begin{equation}
\mathbf{x}_i^+ = \mathbf{x}_i^- + \mathbf{d}_i,
\end{equation}
where $\mathbf{x}_i^-$ denotes the truthful representation and $\mathbf{d}_i$ captures hallucination-induced variation.
We estimate two covariance matrices:
\begin{align}
\boldsymbol{\Sigma}_T &= \frac{1}{N} \sum_{i=1}^N (\mathbf{x}_i^- - \boldsymbol{\mu}_T)(\mathbf{x}_i^- - \boldsymbol{\mu}_T)^\top, &
\boldsymbol{\Sigma}_H &= \frac{1}{N} \sum_{i=1}^N \textbf{d}_i \textbf{d}_i^\top.
\end{align}
Here, $\boldsymbol{\Sigma}_T$ captures second-order structure of truthful representations,
while $\boldsymbol{\Sigma}_H$ captures structured distortion induced by hallucinations.
Highly anisotropic spectra (Fig.~\ref{fig:spectra}) indicate that hallucination manifests as directional variance amplification rather than isotropic noise.

\subsection{Wiener Spectral Filtering in Latent Space}
We interpret hidden representations as $
\mathbf{h} = \mathbf{s} + \mathbf{n}$,
where $\mathbf{s}$ is the truthful component with covariance $\boldsymbol{\Sigma}_T$ and 
$\mathbf{n}$ is hallucination-associated distortion with covariance $\boldsymbol{\Sigma}_H$. 
We assume that the truthful component $\mathbf{s}$ and hallucination distortion $\mathbf{n}$ are zero-mean and mutually uncorrelated, i.e.,
\begin{assumption}
\label{ass:signal_model}
The truthful component $\mathbf{s}$ and hallucination distortion $\mathbf{n}$ are zero-mean and mutually uncorrelated:
\begin{equation}
    \mathbb{E}[\mathbf{s}] = \mathbf{0}, \quad \mathbb{E}[\mathbf{n}] = \mathbf{0}, \quad \mathbb{E}[\mathbf{s}\mathbf{n}^\top] = \mathbf{0}.
\end{equation}
\end{assumption}
Under Assumption~\ref{ass:signal_model} and the additive model $\mathbf{h} = \mathbf{s} + \mathbf{n}$, and since $\mathbf{s}$ and $\mathbf{n}$ are uncorrelated, the covariance of the observed representation decomposes as
\begin{equation}
    \mathrm{Cov}(\mathbf{h}) = \boldsymbol{\Sigma}_T + \boldsymbol{\Sigma}_H.
\end{equation}
Standard LMMSE theory then gives the Wiener operator:
\begin{equation}
\mathbf{A}^\star = \boldsymbol{\Sigma}_T(\boldsymbol{\Sigma}_T + \boldsymbol{\Sigma}_H)^{-1}.
\end{equation}

{The additive model is not a literal claim that the transformer factorizes hidden states into independent semantic components; it is a paired-residual construction induced by the calibration data. For each image and layer we pair a hallucinated representation $\mathbf{x}_i^+$ with the truthful representation $\mathbf{x}_i^-$ of the same image and set $\mathbf{d}_i = \mathbf{x}_i^+ - \mathbf{x}_i^-$. After centering ($\mathbf{s}_i = \mathbf{x}_i^- - \boldsymbol{\mu}_T$, $\mathbf{n}_i = \mathbf{d}_i - \boldsymbol{\mu}_d$, $\mathbf{h}_i = \mathbf{x}_i^+ - \boldsymbol{\mu}_+$) and using $\boldsymbol{\mu}_+ = \boldsymbol{\mu}_T + \boldsymbol{\mu}_d$, we obtain $\mathbf{h}_i = \mathbf{s}_i + \mathbf{n}_i$ with zero mean. The only substantive approximation is the weak cross-covariance assumption $\mathbf{C} = \mathbb{E}[\mathbf{s}\mathbf{n}^\top] \approx \mathbf{0}$; retaining it gives the exact correlated LMMSE estimator $\mathbf{A}_C^\star = (\boldsymbol{\Sigma}_T + \mathbf{C})(\boldsymbol{\Sigma}_T + \boldsymbol{\Sigma}_H + \mathbf{C} + \mathbf{C}^\top)^{-1}$, and our estimator is its $\mathbf{C} \approx \mathbf{0}$ case, validated empirically in Section~\ref{sec:assumption_validation}.}

To obtain an interpretable spectral form, we express the estimator in the eigenbasis of the hallucination covariance $\boldsymbol{\Sigma}_H = \mathbf{Q} \boldsymbol{\Lambda} \mathbf{Q}^\top$, whose spectrum empirically reveals a small set of dominant distortion directions (Appendix.~\ref{app:spectrum}), yielding eigenmodes $\{\mathbf{q}_j\}$ with distortion variances $\{\lambda_j\}$. In this basis, the signal variance in mode $j$ is $\tau_j^2 = \mathbf{q}_j^\top \boldsymbol{\Sigma}_T \mathbf{q}_j$. The distortion-to-signal ratio in mode $j$ is $\nu_j = \lambda_j / \tau_j^2$ (with $\tau_j^2 > 0$). Mode-wise Wiener gains are
\begin{equation}
\gamma_j = \frac{1}{1+\nu_j} = \frac{\tau_j^2}{\tau_j^2 + \lambda_j}.
\end{equation}

The standard Wiener gains $\gamma_j$ provide the MMSE-optimal attenuation, but in practice a tunable shaping strength is desirable to compensate for covariance estimation error and distributional mismatch between the calibration set and deployment data. We therefore generalize via a sharpness exponent $\alpha > 0$:
\begin{equation}
\tilde{\gamma}_j = (1+\nu_j)^{-\alpha} = \left( \frac{\tau_j^2}{\tau_j^2 + \lambda_j} \right)^\alpha.
\end{equation}

The resulting filter is
\begin{equation}
\mathbf{F}_{\alpha} = \mathbf{Q}\,\mathrm{diag}(\tilde{\gamma}_1,\dots,\tilde{\gamma}_d)\,\mathbf{Q}^\top.
\end{equation}

{The mode-wise filter $\mathbf{F}_\alpha$ is a diagonalized Wiener approximation in the eigenbasis of $\boldsymbol{\Sigma}_H$, not generally an exact rewriting of the full operator $\boldsymbol{\Sigma}_T(\boldsymbol{\Sigma}_T + \boldsymbol{\Sigma}_H)^{-1}$; it becomes exact when $\boldsymbol{\Sigma}_T$ and $\boldsymbol{\Sigma}_H$ are jointly diagonalizable. We use the eigenvectors of $\boldsymbol{\Sigma}_H$ to identify hallucination-dominated directions and compute per-mode truthful variance $\tau_j^2 = \mathbf{q}_j^\top \boldsymbol{\Sigma}_T \mathbf{q}_j$, yielding a stable approximation chosen for interpretability and numerical robustness.}

Wiener filtering shapes modes according to their distortion-to-signal ratio. This continuity ensures stability: small perturbations in the covariance estimates induce proportionally bounded changes in $\mathbf{F}_\alpha$.
\myparagraph{Stability of Spectral Shaping} Since the spectral filter is constructed from empirical covariance estimates, it is important that the resulting operator remains stable under estimation noise. The following lemma shows that operator-Lipschitz spectral maps guarantee this property, ensuring that small covariance perturbations do not produce large changes in the resulting representation operator.

\begin{lemma}
\label{lem:spectral_stability}
Let $\mathbf{C}, \widehat{\mathbf{C}} \in \mathbb{R}^{d\times d}$ be symmetric, and let $g$ be a scalar function defined on an interval $I\subset \mathbb{R}$ that contains $\sigma(\mathbf{C}) \cup \sigma(\widehat{\mathbf{C}})$. Assume $g$ is \emph{operator-Lipschitz} on $I$, i.e., there exists $K>0$ such that for all symmetric $\mathbf{A},\mathbf{B}$ with $\sigma(\mathbf{A}),\sigma(\mathbf{B})\subset I$,
\begin{equation}
\|g(\mathbf{A})-g(\mathbf{B})\|_2 \le K\,\|\mathbf{A}-\mathbf{B}\|_2.
\end{equation}
Then $g$ is stable in the sense that
\begin{equation}
\|g(\widehat{\mathbf{C}})-g(\mathbf{C})\|_2 \le K\,\|\widehat{\mathbf{C}}-\mathbf{C}\|_2.
\end{equation}
\end{lemma}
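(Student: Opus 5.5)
The lemma is essentially an instantiation of its own hypothesis, so the plan is to check that the pair $(\mathbf{A},\mathbf{B})=(\widehat{\mathbf{C}},\mathbf{C})$ is admissible in the operator-Lipschitz condition and then read off the bound. No spectral perturbation theory is needed beyond confirming that every object appearing in the inequality is well defined.

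\emph{Step 1: $g(\mathbf{C})$ and $g(\widehat{\mathbf{C}})$ are well defined.} Since $\mathbf{C}$ and $\widehat{\mathbf{C}}$ are real symmetric, the spectral theorem gives orthogonal eigendecompositions
\[
\mathbf{C}=\mathbf{Q}\boldsymbol{\Lambda}\mathbf{Q}^\top, \qquad \widehat{\mathbf{C}}=\widehat{\mathbf{Q}}\widehat{\boldsymbol{\Lambda}}\widehat{\mathbf{Q}}^\top,
\]
with real eigenvalues. Because $\sigma(\mathbf{C})\cup\sigma(\widehat{\mathbf{C}})\subset I$ and $g$ is defined on $I$, the matrices $g(\boldsymbol{\Lambda})$ and $g(\widehat{\boldsymbol{\Lambda}})$ exist. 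Hence $g(\mathbf{C})$ and $g(\widehat{\mathbf{C}})$ are defined as in the background definition of spectral operators.

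\emph{Step 2: the definition does not depend on the chosen eigenbasis.} When eigenvalues repeat, $\mathbf{Q}$ is not unique. I would handle this by writing $g(\mathbf{C})=\sum_{\lambda\in\sigma(\mathbf{C})} g(\lambda)\mathbf{P}_\lambda$, where $\mathbf{P}_\lambda$ are the orthogonal spectral projectors. These projectors are unique, so $g(\mathbf{C})$ is unique and the norms appearing in the inequality are unambiguous. This is the only point requiring care.

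\emph{Step 3: apply the hypothesis.} Set $\mathbf{A}=\widehat{\mathbf{C}}$ and $\mathbf{B}=\mathbf{C}$. Both are symmetric, and by Step 1 their spectra lie in $I$. The operator-Lipschitz assumption therefore applies and gives
\[
\|g(\widehat{\mathbf{C}})-g(\mathbf{C})\|_2\le K\|\widehat{\mathbf{C}}-\mathbf{C}\|_2,
\]
which is the claim.

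\emph{Remark linking to the filter.} To connect the lemma to $\mathbf{F}_\alpha$, I would note that a $C^1$ function on a compact interval with Lipschitz derivative is operator-Lipschitz. The constant $K$ is then bounded by a multiple of the Lipschitz constants of $g$ and $g'$, for instance via the Daleckii--Krein divided-difference representation. The map $\lambda\mapsto(1+\lambda/\tau^2)^{-\alpha}$ is smooth on $[0,\infty)$, so it qualifies. This remark is not required for the lemma itself.
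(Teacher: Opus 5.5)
Your argument is correct, and it is more direct than the paper's. You observe that the operator-Lipschitz hypothesis, as quantified in the statement, already ranges over every pair of symmetric matrices with spectra in $I$. So $(\mathbf{A},\mathbf{B})=(\widehat{\mathbf{C}},\mathbf{C})$ is admissible, and the conclusion is immediate once $g(\mathbf{C})$ and $g(\widehat{\mathbf{C}})$ are well defined; your Step~2 on spectral projectors settles the only subtlety there.

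The paper instead runs a path argument. It interpolates $\mathbf{C}_t=\mathbf{C}+t(\widehat{\mathbf{C}}-\mathbf{C})$ and invokes the Daleckii--Krein theorem to differentiate $t\mapsto g(\mathbf{C}_t)$. It then bounds the Fr\'echet derivative by $K$ by applying the Lipschitz hypothesis to $\mathbf{A}$ and $\mathbf{A}+\varepsilon\mathbf{X}$, and integrates back. That detour is circular in effect: a derivative bound is extracted from the Lipschitz property only to rebuild the same property along a segment. It also quietly needs more than the lemma assumes. The Daleckii--Krein step requires $g\in C^1$. The claim that $\sigma(\mathbf{A}+\varepsilon\mathbf{X})\subset I$ for all small $\varepsilon\neq 0$ fails when $\mathbf{A}$ has an eigenvalue at an endpoint of a closed $I$, e.g.\ $I=[0,\infty)$ for rank-deficient covariances.

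Concretely, $g(x)=|x|$ satisfies the stated hypothesis on $d\times d$ matrices with $K=\sqrt{d}$, from the Frobenius bound $\|g(\mathbf{A})-g(\mathbf{B})\|_F\le\|\mathbf{A}-\mathbf{B}\|_F$. Yet $\mathbf{A}\mapsto|\mathbf{A}|$ is not differentiable at $\mathbf{0}$. Your proof covers this case; the paper's, as written, does not. The path-and-derivative route would be the natural one only if the hypothesis were a uniform bound on $\|Dg(\mathbf{A})\|_{\mathrm{op}}$, which is really the territory of your closing remark.

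One caution about that remark. In the paper's filter the gain $(1+\lambda_j/\tau_j^2)^{-\alpha}$ depends on the mode through $\tau_j^2=\mathbf{q}_j^\top\boldsymbol{\Sigma}_T\mathbf{q}_j$. So $\mathbf{F}_\alpha$ is not $g(\boldsymbol{\Sigma}_H)$ for a single scalar $g$, and the lemma by itself does not give stability of $\mathbf{F}_\alpha$ under joint perturbation of $(\boldsymbol{\Sigma}_T,\boldsymbol{\Sigma}_H)$.
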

Proof is provided in Appendix.~\ref{app:proof}.

{In our implementation the gains satisfy $0 \le f_j \le 1$, so the edit is non-expansive in the hallucination eigenbasis and amplifies no spectral mode. Rather than hard rank truncation, all eigenmodes are retained and attenuated continuously, reducing sensitivity to unstable eigenvector ordering. We clamp tiny negative $\tau_j^2$ to zero and leave modes unchanged when $\tau_j^2 + \lambda_j$ is numerically zero. The sensitivity analysis in Appendix~\ref{app:sensitivity_alpha} shows performance remains stable over a broad range of $\alpha$.}
\myparagraph{Interpretation.} The mode-wise gains are set by the distortion-to-signal ratios derived from the covariance structure, and $\alpha$ acts only as a \emph{sharpness} control that adjusts how strongly these gains are expressed without changing their ordering. The transformation therefore remains stable and model-adaptive, governed by covariance geometry rather than downstream tuning, and admits a broad operating range for $\alpha$, typically on the order of \(1\text{--}10^{2}\) (Appendix~\ref{app:sensitivity_alpha}).

\subsection{Weight Correction and Inference}

We incorporate the filter into a selected subset of deeper layers $\mathcal{L} \subseteq \{1,\dots,L\}$, where hallucination-induced separation is empirically most pronounced (see Section~\ref{sec:ablations} for an ablation over layer depth), by modifying the FFN output projection
$\widetilde{W}_{\ell}^{\text{out}}
= F_{\alpha}\,W_{\ell}^{\text{out}}$, 
where $F_{\alpha} \in \mathbb{R}^{d \times d}$ acts on the output feature dimension.

This edit corresponds to applying the Wiener estimator in representation space.
Because $\mathbf{F}_{\alpha}$ is precomputed offline and absorbed into the weights,
no additional computation is introduced at inference time.
The architecture, and runtime remain unchanged.
\section{Experiments}
\begin{figure*}[t]
\centering
\includegraphics[width=1\textwidth]{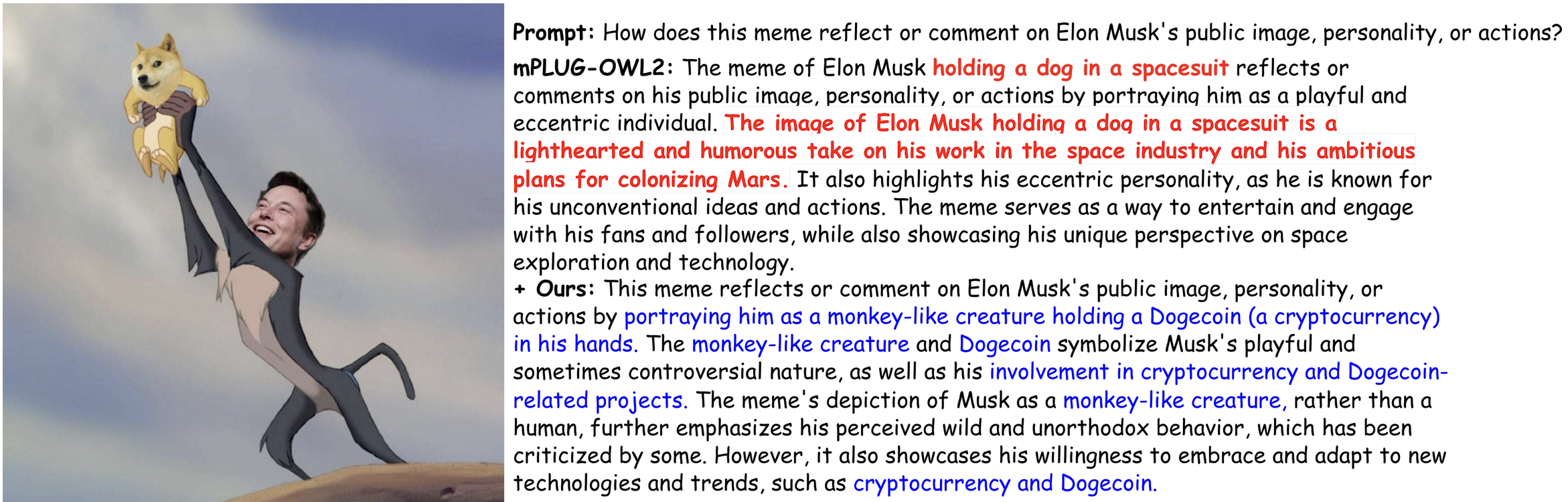}
 \caption{A qualitative case study comparing model-generated descriptions for a complex meme. We visualize the outputs from the  Baseline and our method, color-coding text segments as \textcolor{red}{Hallucinations} and \textcolor{blue}{Truth}.}\label{fig:qual_results}
\vspace{-7pt}
\end{figure*}

\noindent\textbf{Evaluation Datasets\quad} We evaluate the proposed Wiener-based correction on three complementary hallucination-centric benchmarks. (1) CHAIR~\citep{rohrbach-etal-2018-object} measures hallucination frequency by comparing generated captions with MSCOCO~\citep{lin2014microsoft} object annotations, reporting $\text{CHAIR}_S$ (sentence-level rate) and $\text{CHAIR}_I$ (object-level rate); lower is better, over 500 MSCOCO validation images averaged across three seeds. (2) POPE~\citep{li2023evaluating} is a VQA-style yes/no benchmark probing object presence, with three negative-sampling strategies (random, popular, and adversarial); we report accuracy, precision, and F1 on 500 MSCOCO images with ten queries each. (3) MME~\citep{fu2023mme} is a broader diagnostic suite for multimodal reasoning and visual grounding, complementing CHAIR and POPE across object presence, attributes, and spatial relationships.

\textbf{Vision-Language Models.}
We evaluate the proposed Wiener correction on three open-source VLMs representing different vision-language fusion paradigms. 
\citet{liu2023visual} present LLaVA-1.5, which adopts early fusion, concatenating \citet{radford2021learning} CLIP visual tokens with text embeddings before processing with Vicuna-7B, emphasizing modularity and efficiency. 
\citet{zhu2023minigpt} introduce MiniGPT-4 which follows a bridging design, where Q-Former–compressed visual features are injected into LLaMA-7B, reducing token dimensionality while preserving alignment. 
\citet{ye2023mplugowl2} propose mPLUG-Owl2 which employs a mid-fusion strategy with modular connectors and dual-branch pathways to balance visual understanding and text generation. 
These architectures span early, bridged, and mid-level fusion mechanisms, enabling evaluation across diverse representation geometries.

\textbf{Baselines.}
We compare against representative hallucination-mitigation approaches. Decoding-based strategies operate at inference time without modifying parameters: beam search~\citep{freitag2017beam}; Visual Contrastive Decoding (VCD)~\citep{leng2024mitigating}, which contrasts logits under original and perturbed visual inputs; OPERA~\citep{huang2024opera}, which adds an over-trust penalty and a retrospection mechanism; HALC~\citep{chen2024halc}, an adaptive focal-contrast decoding strategy; and MARINE~\citep{zhao2025mitigating}, which refines generation using visual-text alignment signals. Weight-editing approaches instead modify parameters offline: Nullu~\citep{yang2025nullu} identifies hallucination-associated directions via eigendecomposition of feature differences and removes dominant components through orthogonal projection.

\begin{table}[t]
\centering

\setlength{\tabcolsep}{4pt}
\renewcommand{\arraystretch}{1.15}
\begin{adjustbox}{max width=\textwidth}
\begin{tabular}{@{}l@{~}|@{~}c@{~}c@{~}c@{~}|@{~}c@{~}c@{~}c@{~}|@{~}c@{~}c@{~}c@{}}
\toprule
\textbf{Method} 
& \multicolumn{3}{c|}{\textbf{LLaVA-1.5}} 
& \multicolumn{3}{c|}{\textbf{MiniGPT-4}} 
& \multicolumn{3}{c}{\textbf{mPLUG-Owl2}} \\
\cmidrule(lr){2-4} \cmidrule(lr){5-7} \cmidrule(lr){8-10}
& CHAIR$_S$$\downarrow$ & CHAIR$_I$$\downarrow$ & BLEU$\uparrow$
& CHAIR$_S$$\downarrow$ & CHAIR$_I$$\downarrow$ & BLEU$\uparrow$
& CHAIR$_S$$\downarrow$ & CHAIR$_I$$\downarrow$ & BLEU$\uparrow$ \\
\midrule
Greedy
& 18.87{\scriptsize$\pm$1.68} & 6.03{\scriptsize$\pm$0.61} & 0.158{\scriptsize$\pm$0.002}
& 21.20{\scriptsize$\pm$1.60} & 7.40{\scriptsize$\pm$0.52} & 0.156{\scriptsize$\pm$0.002}
& 23.33{\scriptsize$\pm$1.97} & 8.50{\scriptsize$\pm$0.96} & 0.151{\scriptsize$\pm$0.002} \\
Beam Search~\cite{freitag2017beam}
& 17.40{\scriptsize$\pm$1.00} & 5.67{\scriptsize$\pm$0.46} & 0.159{\scriptsize$\pm$0.001}
& 21.47{\scriptsize$\pm$3.23} & 7.30{\scriptsize$\pm$0.95} & 0.160{\scriptsize$\pm$0.001}
& 19.60{\scriptsize$\pm$0.20} & 7.27{\scriptsize$\pm$0.31} & 0.154{\scriptsize$\pm$0.001} \\
DoLa~\cite{chuang2024dola}
& 19.33{\scriptsize$\pm$1.29} & 6.03{\scriptsize$\pm$0.35} & 0.158{\scriptsize$\pm$0.002}
& 21.53{\scriptsize$\pm$1.66} & 7.67{\scriptsize$\pm$0.68} & 0.156{\scriptsize$\pm$0.002}
& 21.87{\scriptsize$\pm$1.89} & 8.77{\scriptsize$\pm$0.72} & 0.151{\scriptsize$\pm$0.001} \\
VCD~\cite{leng2024mitigating}
& 23.87{\scriptsize$\pm$2.72} & 7.70{\scriptsize$\pm$0.90} & 0.144{\scriptsize$\pm$0.010}
& 21.13{\scriptsize$\pm$2.25} & 7.63{\scriptsize$\pm$0.68} & 0.154{\scriptsize$\pm$0.001}
& 27.07{\scriptsize$\pm$1.22} & 10.23{\scriptsize$\pm$0.55} & 0.139{\scriptsize$\pm$0.003} \\
OPERA~\cite{huang2024opera}
& 17.60{\scriptsize$\pm$2.09} & 5.63{\scriptsize$\pm$0.42} & 0.160{\scriptsize$\pm$0.003}
& 21.73{\scriptsize$\pm$2.04} & 7.63{\scriptsize$\pm$0.47} & 0.173{\scriptsize$\pm$0.002}
& 17.47{\scriptsize$\pm$0.61} & 7.33{\scriptsize$\pm$0.72} & 0.151{\scriptsize$\pm$0.002} \\
MARINE~\cite{zhao2025mitigating}
& 18.27{\scriptsize$\pm$0.76} & 5.90{\scriptsize$\pm$0.36} & 0.155{\scriptsize$\pm$0.002}
& 20.27{\scriptsize$\pm$2.42} & 7.27{\scriptsize$\pm$0.75} & 0.152{\scriptsize$\pm$0.001}
& -- & -- & -- \\
HALC~\cite{chen2024halc}
& 16.13{\scriptsize$\pm$0.99} & 5.23{\scriptsize$\pm$0.47} & 0.159{\scriptsize$\pm$0.002}
& 17.67{\scriptsize$\pm$1.40} & 6.57{\scriptsize$\pm$0.84} & 0.175{\scriptsize$\pm$0.002}
& 17.27{\scriptsize$\pm$0.58} & 7.10{\scriptsize$\pm$0.61} & 0.153{\scriptsize$\pm$0.001} \\
Nullu~\cite{yang2025nullu} (Greedy)
& 19.00{\scriptsize$\pm$1.25} & 5.93{\scriptsize$\pm$0.57} & 0.152{\scriptsize$\pm$0.002}
& 18.87{\scriptsize$\pm$1.50} & 7.03{\scriptsize$\pm$0.42} & 0.153{\scriptsize$\pm$0.002}
& 18.80{\scriptsize$\pm$1.91} & 7.27{\scriptsize$\pm$0.97} & 0.149{\scriptsize$\pm$0.001} \\
Nullu~\cite{yang2025nullu} (Beam)
& 17.60{\scriptsize$\pm$0.87} & 5.90{\scriptsize$\pm$0.36} & 0.153{\scriptsize$\pm$0.002}
& 19.07{\scriptsize$\pm$3.21} & 6.97{\scriptsize$\pm$1.04} & 0.158{\scriptsize$\pm$0.001}
& 17.13{\scriptsize$\pm$1.72} & 6.83{\scriptsize$\pm$1.10} & 0.151{\scriptsize$\pm$0.002} \\
\midrule
\textbf{Ours}
& \textbf{14.93}{\scriptsize$\pm$2.61} & \textbf{4.70}{\scriptsize$\pm$0.66} & 0.151{\scriptsize$\pm$0.005}
& \textbf{16.87}{\scriptsize$\pm$2.81} & \textbf{6.13}{\scriptsize$\pm$1.33} & 0.157{\scriptsize$\pm$0.002}
& \textbf{15.40}{\scriptsize$\pm$1.93} & \textbf{6.07}{\scriptsize$\pm$0.60} & 0.142{\scriptsize$\pm$0.001} \\
\bottomrule
\end{tabular}
\end{adjustbox}

\caption{Comparison of VLMs (LLaVA-1.5, MiniGPT-4, and mPLUG-Owl2) on MSCOCO using strategies for hallucination mitigation. CHAIR$_S$ and CHAIR$_I$ measure sentence- and instance hallucinations (lower is better). All experiments use a maximum token limit of 64. }
\label{tab:coco_results}
\vspace{4pt}
\small
\centering

\setlength{\tabcolsep}{4pt}
\renewcommand{\arraystretch}{1.15}
\begin{adjustbox}{max width=\textwidth}
\begin{tabular}{@{}l@{~}l@{~}|@{~}c@{~}c@{~}c@{~}|@{~}c@{~}c@{~}c@{~}|@{~}c@{~}c@{~}c@{~}|@{~}c@{~}c@{~}c@{~}|@{~}c@{~}c@{~}c@{~}|@{~}c@{~}c@{~}c@{~}|@{~}c@{~}c@{~}c@{~}|@{~}c@{~}c@{~}c@{}}
    \toprule
    \textbf{Setting} & \textbf{Model}
      & \multicolumn{3}{c|}{\textbf{Greedy}}
      & \multicolumn{3}{c|}{\textbf{HALC}}
      & \multicolumn{3}{c|}{\textbf{VCD}}
      & \multicolumn{3}{c|}{\textbf{OPERA}}
      & \multicolumn{3}{c|}{\textbf{Nullu}}
      & \multicolumn{3}{c|}{\textbf{Nullu w/ beam}}
      & \multicolumn{3}{c|}{\textbf{Ours}}
      & \multicolumn{3}{c}{\textbf{Ours w/ beam}} \\
    \cmidrule(lr){3-5}\cmidrule(lr){6-8}\cmidrule(lr){9-11}\cmidrule(lr){12-14}\cmidrule(lr){15-17}\cmidrule(lr){18-20}\cmidrule(lr){21-23}\cmidrule(lr){24-26}
    & & \textbf{Acc} & \textbf{Prec} & \textbf{F$_1$}
      & \textbf{Acc} & \textbf{Prec} & \textbf{F$_1$}
      & \textbf{Acc} & \textbf{Prec} & \textbf{F$_1$}
      & \textbf{Acc} & \textbf{Prec} & \textbf{F$_1$}
      & \textbf{Acc} & \textbf{Prec} & \textbf{F$_1$}
      & \textbf{Acc} & \textbf{Prec} & \textbf{F$_1$}
      & \textbf{Acc} & \textbf{Prec} & \textbf{F$_1$}
      & \textbf{Acc} & \textbf{Prec} & \textbf{F$_1$} \\
    \midrule
    \multirow{3}{*}{\textit{Random}}
      & LLaVA-1.5   & 80.60 & 72.90 & 83.39 & 80.70 & 73.04 & 83.45 & 74.23 & 67.28 & 78.55 & 79.63 & 72.33 & 82.50 & 83.33 & 76.57 & 85.22 & 79.17 & 71.63 & 82.26 & \textbf{85.16} & \textbf{79.28} & \textbf{86.52} & 78.13 & 70.30 & 81.66 \\
      & mPLUG-Owl2  & 83.57 & 76.21 & 85.59 & 83.73 & 76.44 & 85.71 & 78.50 & 71.89 & 81.32 & 87.47 & 82.26 & 88.40 & 78.27 & 70.23 & 81.87 & 85.03 & 78.92 & 86.46 & 82.70 & 75.53 & 84.83 & \textbf{88.20} & \textbf{85.06} & \textbf{88.70} \\
      & MiniGPT4    & 75.57 & 68.57 & 79.44 & 78.83 & 86.38 & 76.39 & 65.87 & 62.38 & 70.08 & 77.17 & 71.24 & 79.96 & 82.40 & 79.49 & 83.23 & 83.17 & 84.48 & 82.84 & \textbf{84.96} & 91.20 & \textbf{83.59} & 82.80 & \textbf{92.48} & 80.85 \\
    \midrule
    \multirow{3}{*}{\textit{Popular}}
      & LLaVA-1.5   & 71.73 & 64.36 & 77.51 & 72.10 & 64.69 & 77.72 & 69.57 & 62.97 & 75.74 & 73.90 & 66.57 & 78.62 & 75.60 & 68.16 & 79.75 & 70.60 & 63.55 & 76.67 & \textbf{78.43} & \textbf{71.30} & \textbf{81.53} & 71.13 & 63.85 & 77.13 \\
      & mPLUG-Owl2  & 75.80 & 67.97 & 80.13 & 75.97 & 68.14 & 80.23 & 74.80 & 68.08 & 78.75 & 79.97 & 72.85 & 82.67 & 72.03 & 64.48 & 77.82 & 79.13 & 71.92 & 82.08 & 77.43 & 69.80 & 81.08 & \textbf{83.40} & \textbf{78.17} & \textbf{84.80} \\
      & MiniGPT4    & 58.17 & 54.74 & 69.29 & 71.30 & 72.58 & 70.46 & 60.27 & 57.37 & 66.80 & 66.50 & 61.05 & 73.12 & 69.50 & 64.37 & 74.12 & 71.30 & 67.76 & 73.90 & \textbf{76.73} & 76.80 & \textbf{76.70} & 76.53 & \textbf{79.56} & 75.26 \\
    \midrule
    \multirow{3}{*}{\textit{Advers.}}
      & LLaVA-1.5   & 65.10 & 59.17 & 73.62 & 65.47 & 59.45 & 73.81 & 64.83 & 59.36 & 72.79 & 69.37 & 62.64 & 75.81 & 68.33 & 61.79 & 75.21 & 66.13 & 60.02 & 74.04 & \textbf{70.86} & \textbf{64.03} & \textbf{76.57} & 65.23 & 59.26 & 73.61 \\
      & mPLUG-Owl2  & 68.27 & 61.51 & 75.46 & 68.47 & 61.68 & 75.57 & 70.03 & 63.73 & 75.63 & 72.67 & 65.55 & 77.75 & 65.87 & 59.64 & 74.19 & 72.37 & 65.27 & 77.58 & 69.73 & 62.84 & 76.18 & \textbf{75.96} & \textbf{69.46} & \textbf{79.40} \\
      & MiniGPT4    & 56.00 & 53.39 & 68.21 & 69.57 & 70.01 & 69.23 & 59.10 & 56.42 & 66.15 & 63.60 & 58.77 & 71.46 & 65.40 & 60.70 & 71.62 & 69.17 & 65.43 & 72.49 & 73.40 & 71.99 & \textbf{74.22} & \textbf{73.76} & \textbf{73.94} & 73.13 \\
    \bottomrule
\end{tabular}
\end{adjustbox}
\caption{POPE evaluation under Random, Popular, and Adversarial settings. We report Accuracy (Acc), Precision (Prec), and F$_1$ score. Higher is better.}
\label{tab:pope}

\end{table}
\myparagraph{Implementation Details}
\label{sec:implementation_details}
All experiments are conducted using the official model implementations, with the proposed Wiener edit applied \emph{only} to the \texttt{down\_proj} matrices in the FFN blocks of selected LLM layers.  Hyperparameters are determined using a small held-out validation set of 100 randomly sampled MSCOCO images (disjoint from the evaluation set). We select the intervention layer range $\mathcal{L}$ and the sharpness parameter $\alpha$ based on performance on this validation subset, and fix the resulting configuration across all reported experiments. This procedure yields consistent, model-specific configurations. For LLaVA-1.5, we use layers $20$--$32$ with $\alpha=60.0$. For mPLUG-Owl2, we use layers $20$--$32$ with $\alpha=20.0$. For MiniGPT-4, we use layers $24$--$32$ with $\alpha=10.0$. These configurations are applied uniformly across CHAIR, POPE, and MME evaluations without further tuning.

For generation, we use beam search with beam size $3$ on CHAIR, and greedy decoding (beam size $1$) for MME. Unless otherwise specified, the maximum generation length is set to 64 tokens on CHAIR and POPE, 128 tokens for MME. We use the official evaluation scripts from HALC~\cite{chen2024halc} for all CHAIR-based evaluations.

\subsection{Quantitative Results}

\myparagraph{CHAIR.}
Table~\ref{tab:coco_results} reports caption hallucination and caption quality on CHAIR across three VLM backbones. 
Our \emph{Wiener} weight edit consistently achieves the lowest hallucination rates across all evaluated models. 
Specifically, it reduces CHAIR$_S$ to \textbf{14.93}, \textbf{16.87}, and \textbf{15.40} for LLaVA-1.5, MiniGPT-4, and mPLUG-Owl2, respectively, outperforming prior approaches such as HALC and OPERA. 
Similarly, instance-level hallucination (CHAIR$_I$) is reduced to \textbf{4.70}, \textbf{6.13}, and \textbf{6.07}, yielding the best performance across all architectures. 
While some decoding-based methods achieve slightly higher BLEU scores, our approach maintains competitive caption quality while consistently minimizing hallucinated objects. 

\myparagraph{POPE.}
Table~\ref{tab:pope} evaluates object-presence grounding under the Random, Popular, and Adversarial POPE settings. 
Our method consistently improves factual grounding across the evaluated architectures and splits. 
For LLaVA-1.5, our approach achieves the strongest results in most settings, obtaining the highest accuracy and F$_1$ scores in the Random, Popular, and Adversarial splits compared to both decoding-time mitigation strategies (HALC, VCD, OPERA) and representation-editing approaches such as Nullu. 
MiniGPT-4 exhibits similar trends, where our method significantly improves accuracy and F$_1$, while also achieving particularly high precision, indicating fewer false-positive object predictions. 
For mPLUG-Owl2, the beam-search variant of our method yields the best performance across the three splits, outperforming competing approaches in both accuracy and F$_1$.


\myparagraph{MME.}
We evaluate our method on the 10 relevant subsets of the MME benchmark, including OCR, count, scene, artwork, color, existence, position, celebrity, landmark, and posters (code reasoning, numerical calculations, text translation, and commonsense reasoning are irrelevant). Figure~\ref{fig:mme} reports Dolan--Mor\'e profiles, which plot the fraction of tasks for which a method achieves performance within a ratio $\tau$ of the best method, providing a holistic comparison across diverse subsets. Our approach consistently achieves the strongest profile on both mPLUG-Owl2 and LLaVA-1.5, dominating competing methods over a wide range of $\tau$.
These results demonstrate that Wiener-based spectral shaping improves general visual grounding beyond object hallucination benchmarks. Per-subset MME scores for LLaVA-1.5 are reported in Appendix~\ref{app:mme_subsets}.

{Additional experiments evaluating the method on Gemma3-4B-it (Appendix~\ref{app:gemma3}), per-subset MME scores (Appendix~\ref{app:mme_subsets}), the TempCompass video understanding benchmark (Appendix~\ref{app:tempcompass}), cross-domain transfer (Appendix~\ref{app:cross_domain}), and a faithfulness-versus-informativeness analysis (Appendix~\ref{app:informativeness}) are provided in the supplementary material.}

\myparagraph{FaithDial.}
Table~\ref{tab:faithdial_full} reports the results on FaithDial~\citep{dziri2022faithdial}. Applying the Wiener-based representation filtering to Dream-v0-Instruct-7B\footnote{\url{https://huggingface.co/Dream-org/Dream-v0-Instruct-7B}}~\citep{ye2508dream} substantially improves both grounding quality and hallucination rates. On overlap-based metrics, the filtered model achieves consistent gains, improving BLEU by +2.21 and ROUGE-1/2/L by +2.81, +3.07, and +3.36, respectively. Grounding-oriented metrics also show large improvements, as can be seen, Token-F1 increases by 9.86 points and Q$^2$ by 10.56, indicating stronger alignment between generated responses and the provided knowledge. Most importantly, hallucination rates drop significantly. FaithCritic reports a reduction from 92\% to 74\%, while the Claude Sonnet 4.6 judge shows an even larger decrease from 93\% to 50\%, corresponding to a 43-point absolute reduction in unfaithfulness rate. These results suggest that the proposed representation filtering not only reduces hallucinated statements in diffusion-based language generation but also improves the factual grounding and content fidelity of the produced responses.

\begin{table*}[h]
\centering
\small
\begin{tabular}{@{}lccccccc@{}}
\toprule
\textbf{Model} & \textbf{BLEU} & \textbf{ROUGE-1} & \textbf{ROUGE-2} & \textbf{ROUGE-L} & \textbf{Token-F1} & \textbf{Q$^2$} & \textbf{FaithCritic$\downarrow$} \\
\midrule
Dream          & 3.89 & 25.32 & 7.28 & 18.97 & 29.00 & 19.19 & 85.92 \\
With Nullu     & 3.95 & 25.95 & 7.83 & 19.53 & \textbf{35.00} & 24.63 & 82.91 \\
With Wiener    & \textbf{4.47} & \textbf{26.22} & \textbf{8.16} & \textbf{20.45} & 33.30 & \textbf{26.24} & \textbf{75.39} \\
\bottomrule
\end{tabular}
\caption{Full FaithDial test set evaluation on Dream-v0-Instruct-7B.}
\label{tab:faithdial_full}
\end{table*}

\begin{figure*}[t]

\centering
\begin{tabular}{ccc}
\includegraphics[width=0.48\textwidth]{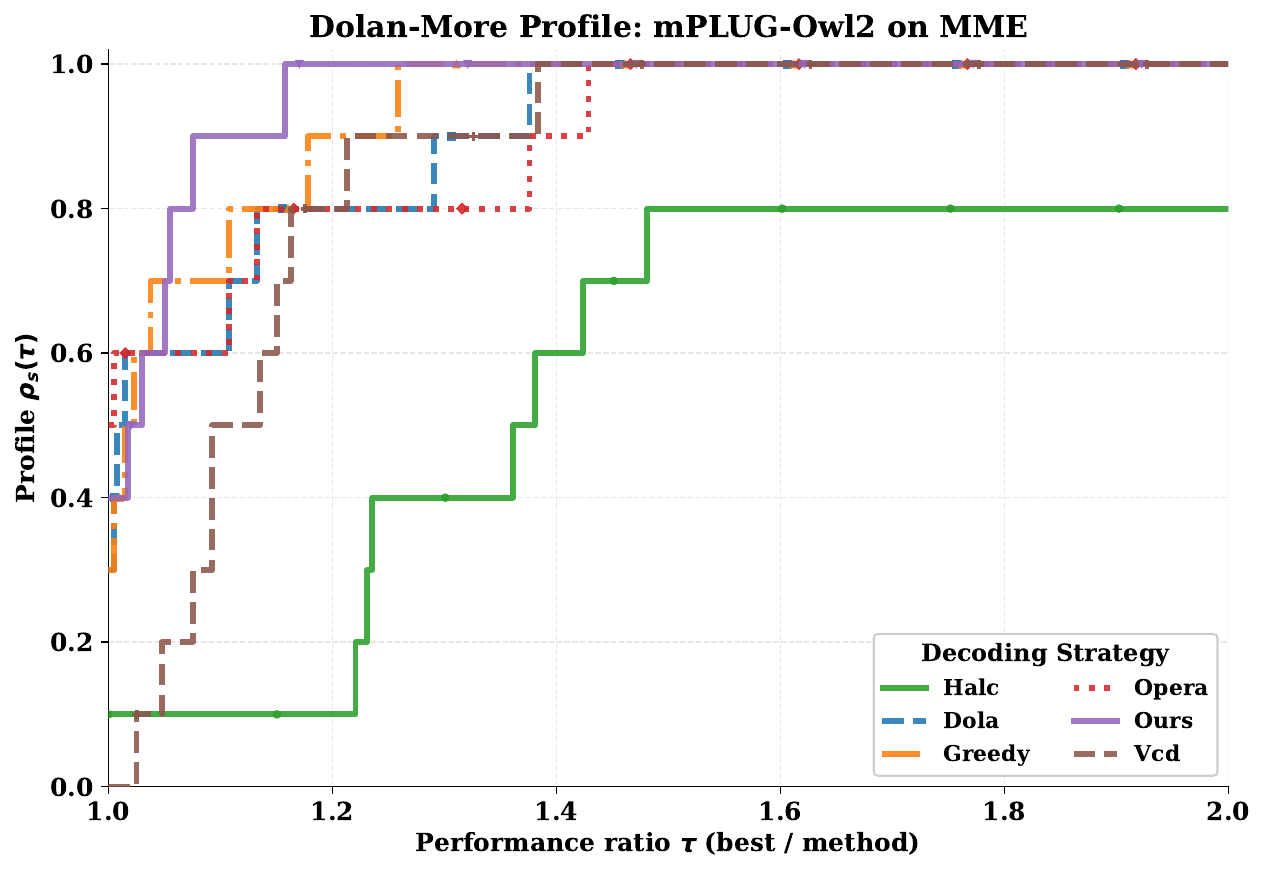} &
\includegraphics[width=0.48\textwidth]{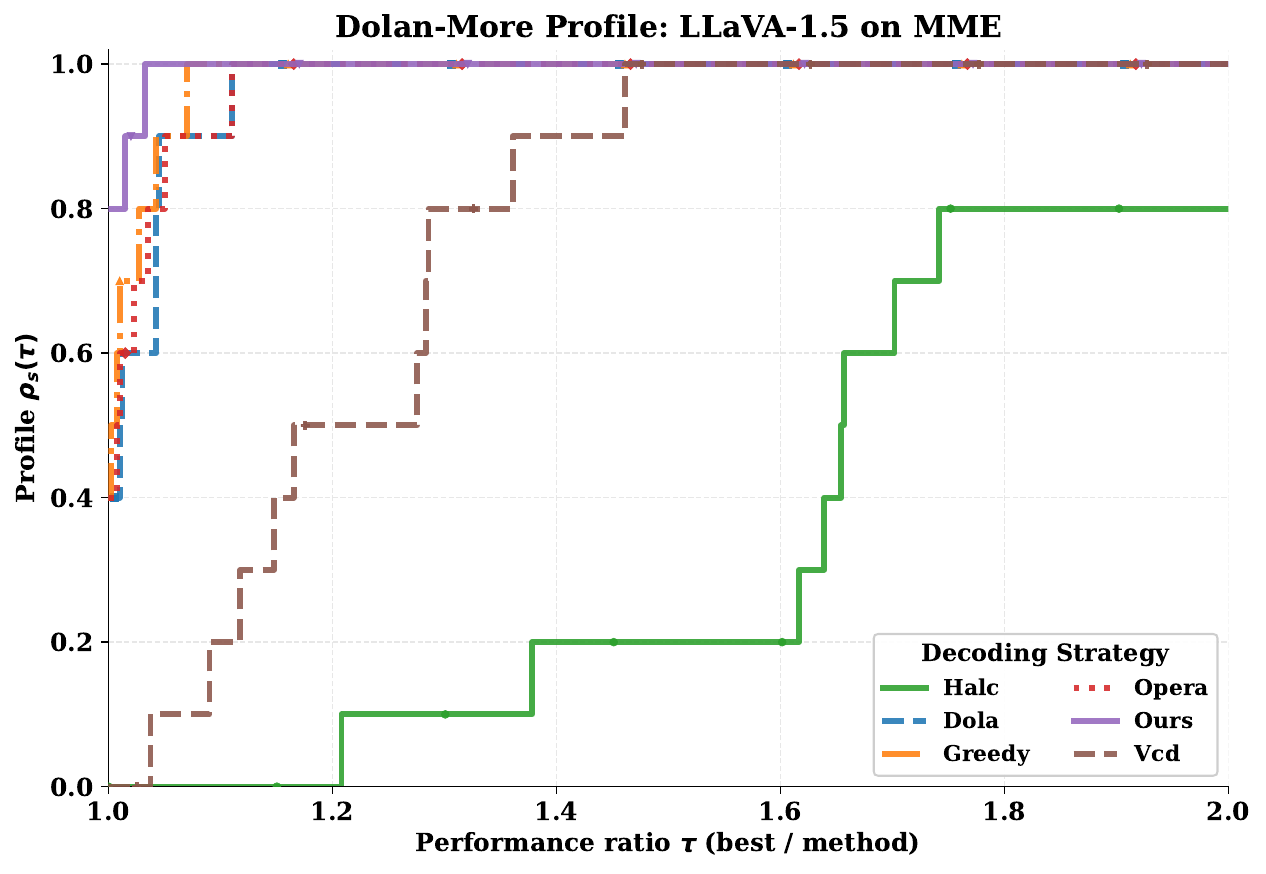} &
\end{tabular}
\caption{Dolan--Mor\'e performance profiles on 10 MME subsets for mPLUG-Owl2 (left) and LLaVA-1.5 (right).}
\label{fig:mme}
\end{figure*}
\subsection{Ablation Studies}
\label{sec:ablations}
We analyze which components of the proposed Wiener edit contribute to the observed improvements by performing controlled ablations on the CHAIR benchmark using MiniGPT-4. Each ablation isolates a specific element of the representation correction while keeping the intervention layers and evaluation protocol identical. Results are reported in Table~\ref{tab:ablations_Wiener}.
\begin{table}[t]
\centering
\small
\resizebox{\textwidth}{!}{%
\begin{tabular}{@{}lcccccc@{}}
\toprule
\textbf{Metric} & \textbf{Baseline} & \textbf{Mean distortion subtraction} & \textbf{Uniform shrinkage} & \textbf{layers 4--12} & \textbf{layers 14--24} & \textbf{Our (layers 24--32)} \\
\midrule
\textbf{CHAIR$_S\downarrow$} & 23.0 & 22.0 & 24.0 & 22.0 & 14.0 & \textbf{13.0} \\
\textbf{CHAIR$_I\downarrow$} & 8.4  & 8.2  & 8.5  & 7.0  & 5.0  & \textbf{4.9}  \\
\textbf{BLEU$\uparrow$}      & 0.157 & 0.157 & 0.157 & 0.186 & 0.166 & 0.162 \\
\bottomrule
\end{tabular}%
}
\caption{Ablation analysis on CHAIR using MiniGPT-4. Lower CHAIR$_S$ and CHAIR$_I$ indicate fewer hallucinations, while higher BLEU indicates better caption quality.}
\label{tab:ablations_Wiener}
\end{table}

\myparagraph{Mean distortion subtraction.}
The first ablation removes only the empirical mean distortion 
$\mu_d = \frac{1}{N}\sum_i d_i$ from the MLP output, without applying the Wiener operator. 
This corresponds to a first-order correction that compensates for a global bias between truthful and hallucinated representations. 
While this adjustment slightly reduces hallucination metrics, the improvement remains modest. 
This behavior indicates that hallucination is not primarily caused by a constant offset in representation space. 
Instead, hallucinated activations exhibit structured variation that cannot be captured by a single mean vector. 
Consequently, subtracting the mean distortion removes only a small portion of the error while leaving the dominant hallucination directions largely unchanged.

\myparagraph{Uniform shrinkage.}
In the second ablation we replace the covariance-driven operator with an isotropic scaling 
$P = cI$, applying the same editing pipeline but without exploiting the covariance structure of the distortions. 
This operation uniformly contracts the representation space regardless of the direction of variation. 
Empirically, this naive shrinkage slightly degrades performance relative to the baseline. 
The degradation suggests that hallucination-related variance is highly anisotropic: certain directions are strongly associated with hallucinations, while others encode useful semantic or visual information. 
Uniform scaling therefore suppresses informative directions together with hallucination-dominated ones, leading to a loss of useful signal

\myparagraph{Layer-wise Wiener editing.}
We analyze the effect of applying the Wiener edit across different layer ranges. As shown in Table~\ref{tab:ablations_Wiener}, editing early layers (4--12) yields only marginal improvements, indicating that low-level representations are less directly associated with hallucination structure. In contrast, intervening in mid-level layers (14--24) leads to a substantial reduction in both CHAIR$_S$ and CHAIR$_I$, suggesting that hallucination-related distortions emerge at semantic abstraction stages. The strongest gains are obtained when editing deeper layers (24--32), where the method achieves the lowest hallucination rates, confirming that distortion is most pronounced near the output stage. These results demonstrate that Wiener-based spectral shaping is most effective when applied to high-level representations, where it aligns generation with visual evidence while preserving fluency. 
Additional ablations of the Wiener Sharpness parameter \texorpdfstring{$\alpha$}{alpha} could be found in Appendix.~\ref{app:sensitivity_alpha}.

\myparagraph{Assumption validation.} We empirically verify the weak cross-covariance assumption underlying our estimator: image-aligned truthfu
l/hallucinated pairs yield low covariance-additivity error and high spectral concentration, both of which collapse under a random-pair control
(Appendix~\ref{sec:assumption_validation}).

\section{Conclusion}

This work addresses object hallucinations in vision-language models by analyzing the covariance structure of hidden representations and introducing a Wiener-based representation filtering framework. Hallucination mitigation is cast as a linear estimation problem in representation space, yielding a {training-free, post-hoc} correction that attenuates hallucination-dominated spectral modes from a lightweight one-time calibration (forward passes and empirical second-order statistics) with no gradient updates. The correction requires no architectural changes and is absorbed directly into existing weight matrices without increasing inference cost, while consistently reducing object hallucination across multiple VLMs and preserving caption quality and fluency.

\section*{Acknowlegments}
This work was supported by a Tel Aviv University Center for AI and Data Science (TAD) grant
and by Len Blavatnik and the Blavatnik Family
foundation. This research was also supported by
the Ministry of Innovation, Science \& Technology,
Israel (1001576154) and the Michael J. Fox Foundation (MJFF-022407). The contribution of Ameen Ali is part of a  PhD thesis research conducted at Tel Aviv University.

\bibliography{colm2026_conference}
\bibliographystyle{colm2026_conference}
\newpage
\appendix

\section{Full Algorithm}
\label{app:algorithm}

Algorithm~\ref{alg:wiener_alg} summarizes the proposed Wiener editing procedure. Given paired hallucinated and truthful representations at each selected layer $\ell \in \mathcal{L}$, we estimate the truthful covariance $\Sigma_T$ and the distortion covariance $\Sigma_H$, compute a Wiener-shaped spectral operator $F_\alpha$ from their induced distortion-to-signal ratios, and absorb this operator into the FFN output projection:
\[
\widetilde{W}^{\mathrm{out}}_\ell = F_\alpha W^{\mathrm{out}}_\ell.
\]
Since the edit is computed offline and merged directly into the existing weights, the architecture and inference-time cost remain unchanged.

\begin{algorithm*}[h]
\caption{Wiener Spectral Shaping for Post-hoc Editing}
\label{alg:wiener_alg}
\small
\begin{algorithmic}[1]
\Require Calibration set $D=\{(I_i,c_i^+,c_i^-)\}_{i=1}^N$, selected layers $\mathcal{L}$, sharpness $\alpha$
\Ensure Edited weights $\{\widetilde{W}^{\mathrm{out}}_\ell\}_{\ell\in\mathcal{L}}$

\For{each layer $\ell \in \mathcal{L}$}
    \State Extract paired features $x_i^+ \gets \mathrm{Feat}_\ell(I_i,c_i^+)$, \, $x_i^- \gets \mathrm{Feat}_\ell(I_i,c_i^-)$
    \State Form distortions $d_i \gets x_i^+ - x_i^-$ and truthful mean $\mu_T \gets \frac{1}{N}\sum_{i=1}^N x_i^-$
    \State $\Sigma_T \gets \frac{1}{N}\sum_{i=1}^N (x_i^- - \mu_T)(x_i^- - \mu_T)^\top$
    \State $\Sigma_H \gets \frac{1}{N}\sum_{i=1}^N d_i d_i^\top$
    \State Eigendecompose $\Sigma_H = Q\Lambda Q^\top$
    \For{each mode $j=1,\dots,d$}
        \State $\tau_j^2 \gets q_j^\top \Sigma_T q_j$, \quad $\nu_j \gets \lambda_j/\tau_j^2$, \quad $\tilde{\gamma}_j \gets (1+\nu_j)^{-\alpha}$
    \EndFor
    \State $F_\alpha \gets Q\,\mathrm{diag}(\tilde{\gamma}_1,\dots,\tilde{\gamma}_d)\,Q^\top$
    \State $\widetilde{W}^{\mathrm{out}}_\ell \gets F_\alpha W^{\mathrm{out}}_\ell$
\EndFor
\State Replace $W^{\mathrm{out}}_\ell$ with $\widetilde{W}^{\mathrm{out}}_\ell$ for all $\ell \in \mathcal{L}$
\end{algorithmic}
\end{algorithm*}

\section{Proof of Lemma~\ref{lem:spectral_stability}}
\label{app:proof}

\begin{proof}
Define the line segment
\begin{equation}
\mathbf{C}_t := \mathbf{C} + t(\widehat{\mathbf{C}} - \mathbf{C}) \quad \text{for } t\in[0,1].
\end{equation}
Since $I$ is convex and contains $\sigma(\mathbf{C}) \cup \sigma(\widehat{\mathbf{C}})$, we have $\sigma(\mathbf{C}_t)\subset I$ for all $t\in[0,1]$. Let
\begin{equation}
\Delta := \widehat{\mathbf{C}} - \mathbf{C}.
\end{equation}

Consider the matrix-valued function
\begin{equation}
\Phi(t) := g(\mathbf{C}_t).
\end{equation}
By standard results on spectral functions (Daleckii--Kre\u{\i}n theorem), $\Phi$ is differentiable and its derivative is the Fr\'echet derivative of $g$ at $\mathbf{C}_t$ applied to $\Delta$:
\begin{equation}
\Phi'(t) = Dg(\mathbf{C}_t)[\Delta].
\end{equation}

Therefore, by the fundamental theorem of calculus,
\begin{equation}
g(\widehat{\mathbf{C}}) - g(\mathbf{C}) = \Phi(1)-\Phi(0) = \int_0^1 Dg(\mathbf{C}_t)[\Delta] \, dt.
\end{equation}

Taking the operator norm and using the triangle inequality gives
\begin{equation}
\|g(\widehat{\mathbf{C}}) - g(\mathbf{C})\|_2
\le \int_0^1 \|Dg(\mathbf{C}_t)[\Delta]\|_2 \, dt
\le \left(\sup_{t\in[0,1]} \|Dg(\mathbf{C}_t)\|_{\mathrm{op}}\right)\,\|\Delta\|_2,
\end{equation}
where
\begin{equation}
\|Dg(\mathbf{C}_t)\|_{\mathrm{op}} := \sup_{\|X\|_2=1} \|Dg(\mathbf{C}_t)[X]\|_2
\end{equation}
is the induced operator norm of the Fr\'echet derivative.

It remains to bound $\|Dg(\mathbf{C}_t)\|_{\mathrm{op}}$ uniformly by the operator-Lipschitz constant $K$. Fix any symmetric $\mathbf{A}$ with $\sigma(\mathbf{A})\subset I$ and any symmetric direction $\mathbf{X}$. For $\varepsilon \neq 0$ small enough, $\sigma(\mathbf{A}+\varepsilon \mathbf{X}) \subset I$, and by the operator-Lipschitz assumption,
\begin{equation}
\|g(\mathbf{A}+\varepsilon \mathbf{X})-g(\mathbf{A})\|_2 \le K\,\|\varepsilon \mathbf{X}\|_2 = K |\varepsilon|\,\|\mathbf{X}\|_2.
\end{equation}

Divide by $|\varepsilon|$ and take $\varepsilon \to 0$. By the definition of the Fr\'echet derivative,
\begin{equation}
\|Dg(\mathbf{A})[\mathbf{X}]\|_2 = \lim_{\varepsilon \to 0} \left\|\frac{g(\mathbf{A}+\varepsilon \mathbf{X}) - g(\mathbf{A})}{\varepsilon}\right\|_2 \le K \|\mathbf{X}\|_2.
\end{equation}

Taking the supremum over $\|\mathbf{X}\|_2=1$ yields $\|Dg(\mathbf{A})\|_{\mathrm{op}} \le K$. Applying this with $\mathbf{A}=\mathbf{C}_t$ for all $t\in[0,1]$ gives
\begin{equation}
\sup_{t\in[0,1]} \|Dg(\mathbf{C}_t)\|_{\mathrm{op}} \le K.
\end{equation}

Substituting into the earlier bound finishes the proof:
\begin{equation}
\|g(\widehat{\mathbf{C}}) - g(\mathbf{C})\|_2 \le K\, \|\widehat{\mathbf{C}} - \mathbf{C}\|_2.
\end{equation}

\end{proof}

\section{Covariance Estimation Details}
\label{app:covariance}
We use empirical covariance estimates with minimal numerical stabilization. The hallucination covariance $\boldsymbol{\Sigma}_H$ is computed as the sample second moment of the paired hallucination-truth residuals $\boldsymbol{\Delta}_i = \mathbf{h}_i^+ - \mathbf{h}_i^-$, while $\boldsymbol{\Sigma}_T$ is computed as the sample covariance of mean-centered truthful activations. The main editor does not use Ledoit-Wolf shrinkage, rank truncation, or an explicit ridge term on $\boldsymbol{\Sigma}_H$ or $\boldsymbol{\Sigma}_T$.

The stabilizing steps are numerical rather than statistical regularization: we clamp tiny negative estimates of $\tau_j^2$ to zero, guard near-zero denominators in the Wiener gain by setting the corresponding gain to 1, and compute the eigendecomposition and edit matrix in float32 before casting back to the model dtype. A small ridge term ($\epsilon = 10^{-20}$) is used only in optional diagnostic scripts that explicitly solve inverse systems for analysis, it is not used in the main weight edit.

In our experiments, we estimate the covariance statistics using the LURE dataset with approximately 3k paired truthful/hallucinated samples per model. Empirically, stable performance is achieved once the dominant hallucination covariance structure is sufficiently captured, consistent with the heavy-tailed low-rank spectrum shown in Appendix~\ref{app:spectrum}.

\newpage
\section{Hallucination Spectrum}
\label{app:spectrum}
\myparagraph{Hallucination covariance spectrum.}
Figure~\ref{fig:spectra} visualizes the eigenvalue spectrum of the hallucination covariance matrix 
$\Sigma_H = Q \Lambda Q^{\top}$ for three representative vision--language models: 
LLaVA-7B, MiniGPT-4, and mPLUG-Owl2. 
Across all architectures, the spectrum exhibits a characteristic heavy-tailed structure, 
with a small number of dominant eigenvalues followed by a long, smoothly decaying tail. 
The leading eigenvalues correspond to high-variance directions in representation space 
that capture the majority of hallucination-related variation. 
In contrast, the remaining eigenmodes contribute progressively smaller variance and 
behave similarly to noise-like fluctuations.

This spectral structure suggests that hallucination behavior is largely concentrated 
within a low-dimensional subspace of the model representation. 
Despite architectural differences between the evaluated VLMs, the qualitative shape of the 
spectrum remains consistent, indicating that this phenomenon is not model-specific but 
rather a general property of multimodal language models. 
Motivated by this spectral structure, we view hallucination as a form of structured distortion distributed across latent modes. This perspective suggests operating directly in the spectral domain, where each mode can be treated according to its distortion-to-signal characteristics. We therefore adopt a Wiener-inspired formulation, in which the representation is shaped through a covariance-driven spectral operator that balances fidelity to the underlying signal with robustness to distortion. This yields a principled mechanism for preserving semantic consistency and fluency while improving alignment with the visual input.
\begin{figure*}[h]
\centering
\includegraphics[width=0.995\textwidth]{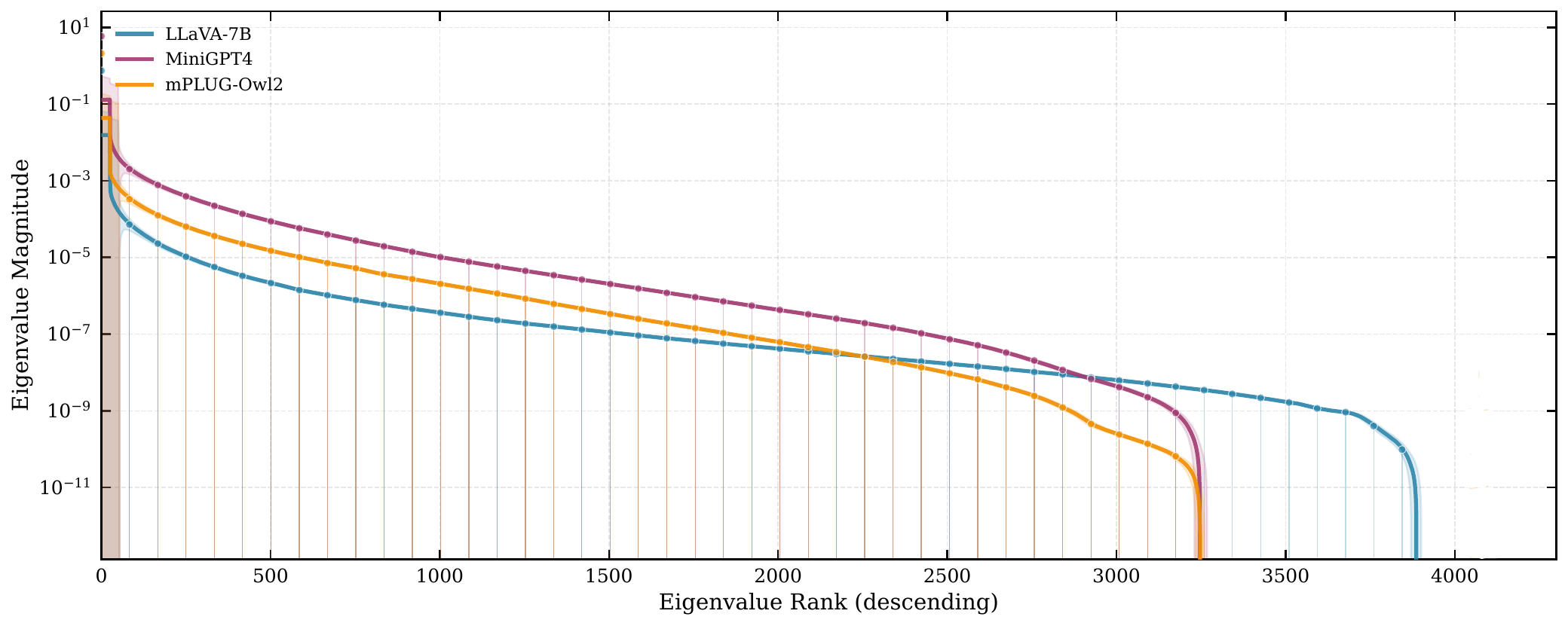}
 \vspace{-4pt}
 \caption{Hallucination spectrum (eigenvalues of $\Sigma_H = Q\Lambda Q^{\top}$) for three 
vision–language models.  
The curves show a small set of high-variance “spikes’’ followed by a long decaying tail, 
indicating that hallucination behavior is concentrated in a low-dimensional set of dominant 
eigenmodes, with the remaining spectrum reflecting noise-like variation.}\label{fig:spectra}
\vspace{-7pt}
\end{figure*}
\newpage
\section{Sensitivity of the Wiener Sharpness Parameter \texorpdfstring{$\alpha$}{alpha}}
\label{app:sensitivity_alpha}

We analyze the sensitivity of the proposed Wiener edit to the
sharpness parameter $\alpha$. In our formulation, $\alpha$ does not determine
which spectral modes are emphasized or de-emphasized, those ratios are
determined by the Wiener gains derived from the signal and distortion
covariances. Instead, $\alpha$ controls the \emph{sharpness} with which these
mode-wise gains are expressed, effectively adjusting how strongly the Wiener
shaping is applied.

To evaluate robustness, we vary $\alpha$ over a wide range while keeping all
other components fixed. Figure~\ref{fig:sensitivity} reports results on 100
randomly sampled MSCOCO images using LLaVA-1.5. Panels (a) and (b) show the
resulting CHAIR$_S$ and CHAIR$_I$ metrics, while panel (c) reports BLEU score.

Across the explored range, the method exhibits stable behavior: hallucination
metrics improve consistently for moderate and large $\alpha$, while BLEU remains
largely unchanged. Importantly, performance does not depend on a narrowly tuned
value, rather, a broad interval of $\alpha$ yields similar improvements. This
behavior reflects the Wiener formulation itself: the relative weighting of
spectral modes is governed by the covariance geometry, while $\alpha$ simply
modulates the sharpness of this transformation.

Overall, the results indicate that the method is robust to the choice of
$\alpha$, with strong performance across a wide range of values. In practice,
we find that values in the range $\alpha \in [10,80]$ provide a good balance
between improved grounding (lower CHAIR metrics) and stable caption quality.

\begin{figure*}[h]
\centering
\begin{tabular}{ccc}
\includegraphics[width=0.31\textwidth]{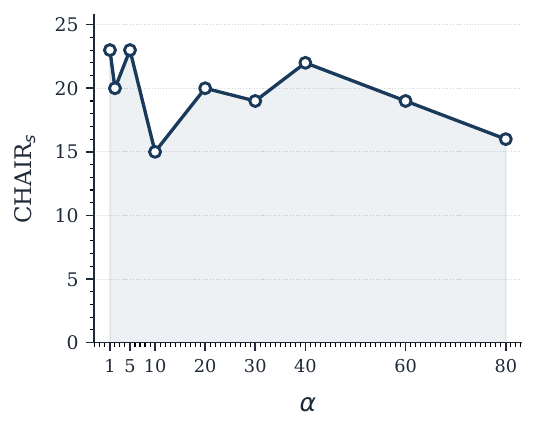} &
\includegraphics[width=0.31\textwidth]{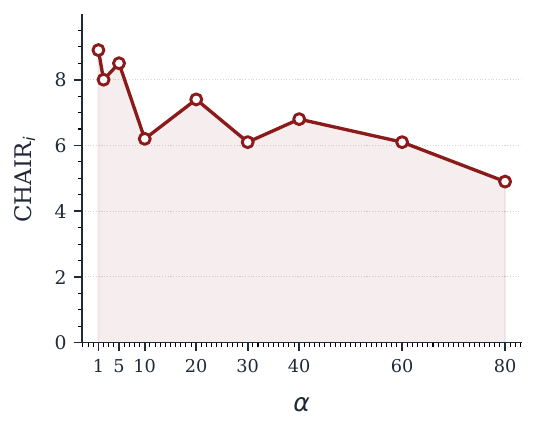} &
\includegraphics[width=0.31\textwidth]{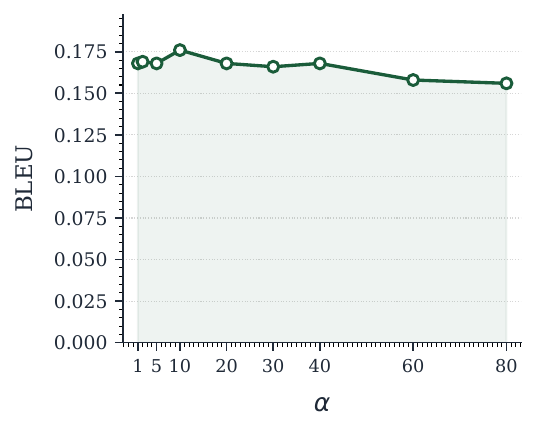} \\
(a) CHAIR$_S$ & (b) CHAIR$_I$ & (c) BLEU
\end{tabular}
\caption{
Sensitivity of the Wiener sharpness parameter $\alpha$ on LLaVA-1.5 evaluated on
100 MSCOCO validation images. Lower CHAIR values indicate fewer hallucinations,
while higher BLEU indicates better caption quality.
}

\label{fig:sensitivity}
\end{figure*}
\newpage
\section{MME Per-Subset Scores}
\label{app:mme_subsets}
Table~\ref{tab:mme_subsets} reports the actual per-subset MME scores for LLaVA-1.5 across all evaluated methods. Our approach achieves the highest mean score of 148.5, outperforming all competing methods including Greedy (147.0) and OPERA (145.6). Notable improvements are observed in Count (+10 over Greedy), Artwork (+5), Celebrity (+3), and Posters (+2), while maintaining competitive or identical performance on tasks such as OCR, Existence, and Position.

\begin{table*}[h]
\centering
\small
\setlength{\tabcolsep}{4pt}
\begin{tabular}{@{}lcccccccccc@{\hspace{10pt}}c@{}}
\toprule
\textbf{Method} & \textbf{OCR} & \textbf{Count} & \textbf{Scene} & \textbf{Artwork} & \textbf{Color} & \textbf{Exist.} & \textbf{Pos.} & \textbf{Celeb.} & \textbf{Land.} & \textbf{Poster} & \textbf{Mean} \\
\midrule
DoLa   & 132 & 138 & 155 & 125 & \textbf{170} & \textbf{188} & \textbf{128} & 130 & 150 & 138 & 145.4 \\
Greedy & 132 & 143 & 155 & 125 & \textbf{170} & \textbf{188} & 130 & 132 & \textbf{152} & 143 & 147.0 \\
Halc   & 80  & 75  & 112 & 107 & 100 & 115 & 78  & 76  & 95  & 72  & 91.0  \\
Opera  & 132 & 138 & 155 & 125 & \textbf{170} & \textbf{188} & \textbf{128} & 130 & 150 & 140 & 145.6 \\
VCD    & 98  & 105 & 145 & 110 & 132 & 170 & 100 & 130 & 130 & 110 & 123.0 \\
\midrule
Ours   & 132 & \textbf{153} & \textbf{157} & \textbf{130} & 165 & \textbf{188} & \textbf{128} & \textbf{135} & \textbf{152} & \textbf{145} & \textbf{148.5} \\
\bottomrule
\end{tabular}
\caption{MME per-subset scores for LLaVA-1.5 across all evaluated methods.}
\label{tab:mme_subsets}
\end{table*}

{
\section{Generalization to Stronger Models: Gemma3}
\label{app:gemma3}
To demonstrate effectiveness on more recent and stronger VLM backbones, we evaluate the proposed Wiener filtering on Gemma3-4B-it, which was released well after the models in our original evaluation. For this experiment, our method edits layers 12--26 with $\alpha = 10.0$. For Nullu, we report its best validation configuration with top\_k $= 1$. Results are reported in Table~\ref{tab:gemma3_chair}.
}
The proposed Wiener filter reduces CHAIR$_S$ from 0.357 to 0.261 and CHAIR$_I$ from 0.080 to 0.073, outperforming Nullu across all hallucination metrics while maintaining competitive caption quality (BLEU). This corresponds to a relative reduction of approximately 27\% in sentence-level hallucination rate over the baseline, and roughly 16\% over Nullu.

\begin{table}[h]
\centering
\small
\begin{tabular}{@{}lccccc@{}}
\toprule
\textbf{Model} & \textbf{CHAIR$_S$$\downarrow$} & \textbf{CHAIR$_I$$\downarrow$} & \textbf{BLEU$\uparrow$} & \textbf{Avg obj.} & \textbf{Avg len.} \\
\midrule
Gemma3          & 0.357 & 0.080 & \textbf{0.040} & 8.21 & 184.2 \\
Gemma3 + Nullu  & 0.311 & 0.077 & 0.038 & 7.37 & 180.1 \\
Gemma3 + Wiener & \textbf{0.261} & \textbf{0.073} & 0.037 & 7.11 & 180.2 \\
\bottomrule
\end{tabular}
\caption{CHAIR evaluation on Gemma3-4B-it. The Wiener filter achieves the lowest hallucination rates while preserving caption quality.}
\label{tab:gemma3_chair}
\end{table}

\section{Faithfulness versus Informativeness}
\label{app:informativeness}
An important question is whether hallucination reduction arises from improved visual grounding or simply from more conservative generation. As shown in Table~\ref{tab:gemma3_chair}, the baseline Gemma3 model produces an average of 8.21 object mentions and 184.2 generated tokens per caption. After applying Wiener filtering, the average object count decreases to 7.11 and generation length to 180.2 tokens. While both editing methods slightly reduce the number of mentioned objects, the reduction is relatively small (approximately 13\% in object count and 2\% in generation length), whereas the improvement in hallucination metrics is substantially larger. These results suggest that the method improves the precision of object mentions by selectively suppressing hallucination-dominated representation directions while largely preserving overall generation length and descriptive content.

{
\section{Video Understanding: TempCompass}
\label{app:tempcompass}
To evaluate whether the proposed representation editing framework generalizes beyond image-based hallucination, we tested on TempCompass~\citep{liu2024tempcompass}, a challenging benchmark that measures temporal reasoning through Caption Matching (CM), Captioning (Cap), Multiple Choice QA (MCQ), and Yes/No QA (Y/N) across Action, Attribute Change, Direction, Order, and Speed dimensions. Despite being calibrated using image-based hallucination supervision, the proposed method consistently improves performance across nearly all TempCompass tasks and dimensions (Table~\ref{tab:tempcompass}). Average gains are observed from 54.2$\to$58.5 on CM, 14.5$\to$23.4 on Captioning, 37.1$\to$38.9 on MCQ, and 18.4$\to$24.0 on Y/N QA. These results suggest that the learned spectral structure captures general multimodal representation distortions that extend to temporal video reasoning. For this evaluation, we used editing layers 12--34 with $\alpha = 40.0$, without task-specific retuning.
}
\begin{table*}[h]
\centering
\small
\setlength{\tabcolsep}{5pt}
\begin{tabular}{@{}l cc cc cc cc@{}}
\toprule
& \multicolumn{2}{c}{\textbf{CM}} & \multicolumn{2}{c}{\textbf{Caption}}
& \multicolumn{2}{c}{\textbf{MCQ}} & \multicolumn{2}{c@{}}{\textbf{Yes/No}} \\
\cmidrule(lr){2-3} \cmidrule(lr){4-5} \cmidrule(lr){6-7} \cmidrule(l){8-9}
\textbf{Dimension} & Base & Ours & Base & Ours & Base & Ours & Base & Ours \\
\midrule
Action       & 58.6 & \textbf{63.6} & 14.1 & \textbf{21.7} & 46.4 & \textbf{53.6} & 14.0 & \textbf{17.6} \\
Attr.\ Change & 56.9 & \textbf{59.7} & 10.9 & \textbf{22.4} & 34.7 & \textbf{36.1} & 27.5 & \textbf{33.0} \\
Direction    & 48.3 & \textbf{54.1} & 17.0 & \textbf{26.8} & 33.4 & \textbf{34.9} & 16.2 & \textbf{21.7} \\
Order        & 56.3 & \textbf{60.0} & 22.5 & \textbf{31.5} & \textbf{37.1} & 31.8 & 25.9 & \textbf{37.4} \\
Speed        & 51.5 & \textbf{58.5} & 7.5  & \textbf{13.9} & 33.1 & \textbf{36.9} & 12.3 & \textbf{16.2} \\
\midrule
\textbf{Avg} & 54.2 & \textbf{58.5} & 14.5 & \textbf{23.4} & 37.1 & \textbf{38.9} & 18.4 & \textbf{24.0} \\
\bottomrule
\end{tabular}
\caption{TempCompass evaluation on Gemma3-4B-it. The proposed method improves performance across nearly all temporal reasoning dimensions and task formats despite being calibrated on image-based hallucination data.}
\label{tab:tempcompass}
\end{table*}
{
\section{Cross-Domain Transfer}
\label{app:cross_domain}
To evaluate whether the learned hallucination covariance transfers across domains without retuning, we learned the covariance structure from one domain and evaluated on another. Specifically, we learned from LURE and evaluated on TruthfulQA~\citep{lin2022truthfulqa}, and conversely learned from HaluEval~\citep{li2023halueval} and evaluated on CHAIR. In both directions, the method reduced visual hallucinations and improved textual truthfulness (Table~\ref{tab:cross_domain}), suggesting that the learned spectral structure captures hallucination-related model biases that are not limited to a single dataset or modality. All experiments were conducted on Gemma3-4B-it with layers 12--34 and $\alpha = 40.0$.
}
\begin{table}[h]
\centering
\small
\begin{tabular}{@{}lccccc@{}}
\toprule
\textbf{Method} & \textbf{CHAIR$_S$$\downarrow$} & \textbf{CHAIR$_I$$\downarrow$} & \textbf{MCQ1$\uparrow$} & \textbf{MCQ2$\uparrow$} & \textbf{MCQ3$\uparrow$} \\
\midrule
Baseline & 0.328 & 0.080 & 0.248 & 0.432 & 0.202 \\
Nullu    & 0.284 & 0.073 & 0.244 & 0.415 & 0.197 \\
Ours     & \textbf{0.266} & \textbf{0.064} & \textbf{0.301} & \textbf{0.510} & \textbf{0.252} \\
\bottomrule
\end{tabular}
\caption{Cross-domain transfer on Gemma3-4B-it. CHAIR columns: filter learned from HaluEval, evaluated on CHAIR. MCQ columns: filter learned from LURE, evaluated on TruthfulQA.}
\label{tab:cross_domain}
\end{table}
\newpage

{
\section{LLM Judge Prompt for FaithDial}
\label{app:judge_prompt}
The following prompt was used for the Claude-based LLM judge in the FaithDial evaluation:

\begin{quote}
\small
\texttt{``You are a fact-checker evaluating whether a language model's dialogue response is grounded.
The model was given a knowledge passage and dialogue history, and must respond based on that context.
GROUNDING RULE: Every factual claim in the response must be supported by EITHER: 1. The knowledge passage, OR 2. The dialogue history (things already said in the conversation).
The model is allowed to: Paraphrase or summarize facts from the knowledge or dialogue history, Refer back to things already discussed in the conversation, Add pure conversational filler with no factual content.
The model is NOT allowed to: Add NEW facts from its own knowledge that appear in neither the knowledge passage nor the dialogue history, Invent specific details not in the context, Generate broken/repetitive text.
Definitions: 1. FAITHFUL: All factual claims come from the knowledge passage or dialogue history. 2. HALLUCINATION: The response introduces facts found in NEITHER the knowledge NOR the dialogue history, or the output is broken/degenerate.
Knowledge: \{knowledge\} Dialogue History: \{history\} Gold Response: \{gold\_response\} Model's Response: \{model\_output\}
Output your response in valid JSON format exactly like this: \{\{ ``is\_hallucination'': true or false, ``category'': ``Faithful'' | ``Hallucination'', ``reasoning'': ``Brief explanation'' \}\}
Return ONLY the JSON object, no other text.''}
\end{quote}
}

\newpage

\section{Assumption Validation}
\label{sec:assumption_validation}
To directly validate the weak cross-covariance assumption underlying our Wiener estimator, we compute assumption diagnostics on Gemma3-4B-it using paired truthful/hallucinated LURE representations over layers 12--26. We measure: (1) normalized cross-covariance $\rho_{\text{cross}} = \|\boldsymbol{\Sigma}_{sn}\|_F / \sqrt{\|\boldsymbol{\Sigma}_T\|_F \|\boldsymbol{\Sigma}_H\|_F}$, (2) covariance additivity error $\delta_{\text{add}} = \|\boldsymbol{\Sigma}_{x^+} - (\boldsymbol{\Sigma}_T + \boldsymbol{\Sigma}_H)\|_F / \|\boldsymbol{\Sigma}_{x^+}\|_F$, and (3) top-$k$ residual concentration $r_k = \mathbb{E}|\mathbf{Q}_k^\top \mathbf{d}_i|_2^2 / \mathbb{E}|\mathbf{d}_i|_2^2$. We also include a random-pair control where truthful partners are shuffled across images, breaking semantic pairing. Results are shown in Table~\ref{tab:assumption_diag}.

\begin{table}[t]
\centering
\small
{
\begin{tabular}{@{}lcccc@{}}
\toprule
\textbf{Setting} & $\rho_{\text{cross}}\downarrow$ & $\delta_{\text{add}}\downarrow$ & $r_{16}$ & $\|\mathbf{A}^\star\|_F$ \\
\midrule
Paired      & 0     & 0.099 & 0.948 & 155.917 \\
Random pair & 0     & 1.413 & 0.807 & 346.525 \\
\bottomrule
\end{tabular}
}
\caption{{Assumption diagnostics on Gemma3-4B-it (layers 12--26). The paired setting yields low additivity error and high spectral concentration, while random pairing breaks both, confirming that the filter depends on meaningful image-aligned pairs.}}
\label{tab:assumption_diag}
\end{table}

The paired setting yields a low additivity error of $0.099$, and the top 16 hallucination modes explain $94.8\%$ of the residual energy, indicating highly structured (non-isotropic) residuals. Random pairing raises the additivity error to $1.413$, reduces $r_{16}$ to $0.807$, and inflates the Wiener operator norm by a relative factor of $1.222$, confirming that the filter depends on meaningful image-aligned pairs rather than arbitrary covariance shrinkage.

\section{CHAIR Qualitative Results}
\label{app:chair_qual}
Figure~\ref{fig:qual_3_chair} presents representative captioning examples comparing greedy decoding, Nullu~\citep{yang2025nullu}, and our Wiener representation edit on LLaVA-1.5.
Across these examples, baseline approaches frequently introduce objects or attributes that are not supported by the visual evidence.
For instance, in the beach scene, greedy decoding hallucinates \emph{palm trees} and additional people sitting under umbrellas, while Nullu incorrectly describes the person as \emph{carrying a surfboard}.
Our method instead produces a grounded description that correctly reflects the visible elements in the image.

Similar behavior appears in the indoor dog example.
Greedy decoding incorrectly states that the dog is \emph{looking at the camera}, while Nullu misidentifies the furniture as a \emph{couch}.
In contrast, our caption correctly identifies the dog sitting on a bed and looking toward the window, consistent with the visual content.

Finally, in the train example, greedy decoding incorrectly describes the vehicle as a \emph{toy train}, and Nullu introduces \emph{people in the scene} that are not visible.
Our method produces a more faithful description that focuses on the visible steam train and surrounding environment without introducing unsupported entities.

Overall, these examples illustrate that hallucinations often arise as spurious object insertions or attribute misidentifications.
By suppressing hallucination-prone representation directions, the proposed Wiener edit produces captions that remain visually grounded while preserving descriptive quality.
\begin{figure*}[h]
\centering
\includegraphics[width=0.95\textwidth]{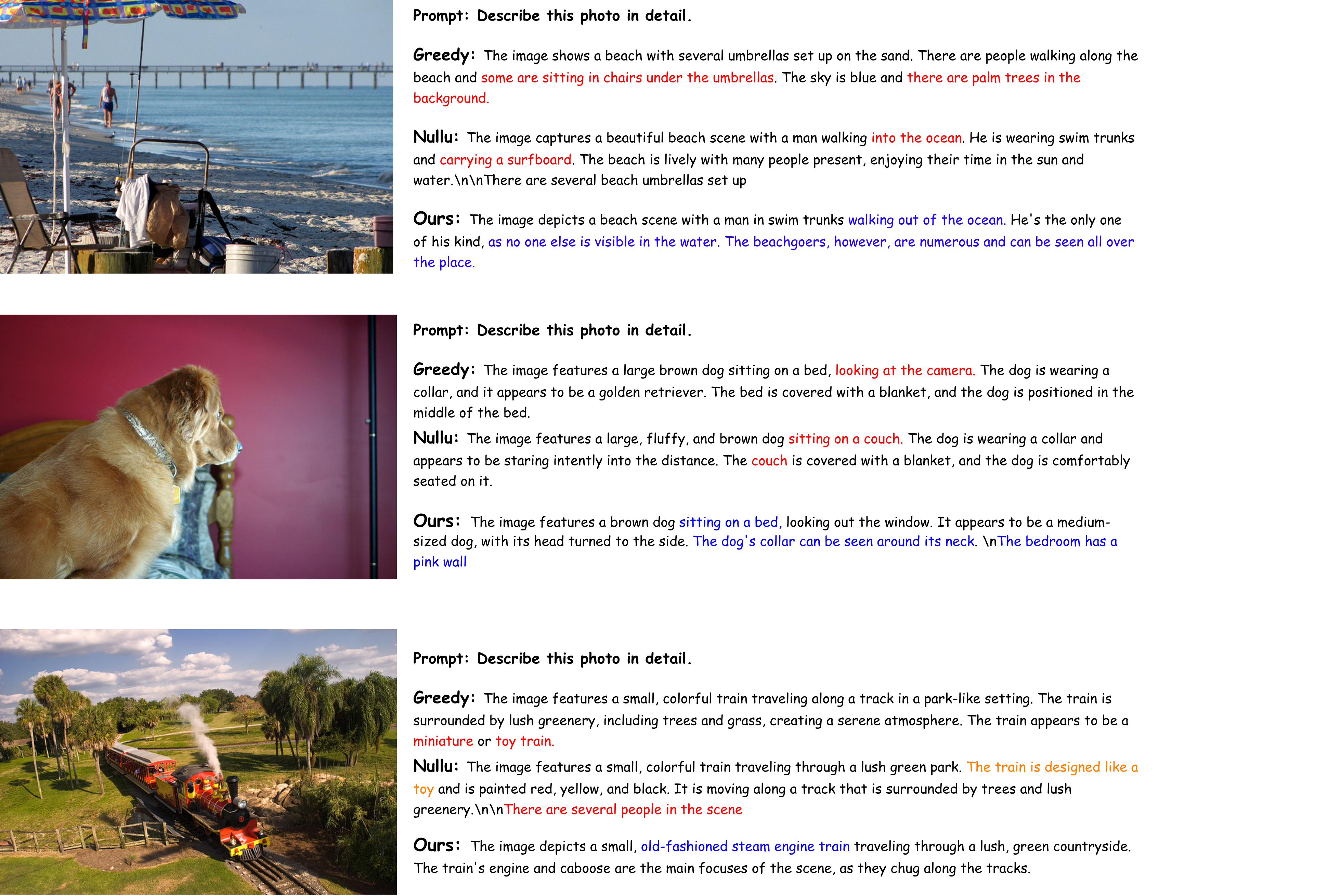}
 \vspace{-4pt}
 \caption{CHAIR captioning examples showing how Nullu and greedy decoding introduce spurious objects, while our method consistently remains visually grounded. Results are over the LLaVA 1.5 7B model.}
 \label{fig:qual_3_chair}
\vspace{-7pt}
\end{figure*}

\section{Runtime and Memory Profiling}
\label{app:runtime}
The filter is computed offline and then absorbed into the FFN output projection weights, so it introduces no additional inference-time memory or runtime cost. To quantify the offline overhead, we profiled filter construction on Gemma3-4B-it. The computation required approximately 2.441 seconds per edited layer, with a maximum per-layer CUDA peak memory increase of 154.01 MB. Since layers are processed sequentially, this memory overhead is not accumulated across layers. This cost is incurred only once during the offline weight-editing stage.

\end{document}